%% file: example_paper.tex
\documentclass{article}

\usepackage{microtype}
\usepackage{subcaption}
\usepackage{booktabs} 
\usepackage{multirow}
\usepackage{color}
\usepackage{balance}
\usepackage{enumitem}
\usepackage{hhline}
\usepackage[normalem]{ulem}
\usepackage{booktabs}
\usepackage{wrapfig}
\usepackage{cancel}
\usepackage{makecell}
\usepackage[most]{tcolorbox}
\tcbuselibrary{breakable}
\usepackage{bm}
\definecolor{myBlue}{HTML}{B8E5FA}
\definecolor{myGreen}{HTML}{CBE4B1}
\newtcolorbox{mystepbox}[2][]{
    enhanced,
    breakable,              
    title=#2,
    center title,
    colback=myGreen!30,
    colframe=myBlue,
    colbacktitle=myBlue,
    coltitle=black!80,
    fonttitle=\bfseries,
    arc=3mm,
    boxrule=1pt,
    before skip=2pt,          
    after skip=2pt,           
    #1                      
}
\newcommand{\stitle}[1]{\vspace{0.8mm} \noindent {\bf #1}}
\newcommand{\method}[1]{\textsc{#1}}
\newcommand{\model}{\method{KCoT}{}}

\usepackage{hyperref}

\usepackage[accepted]{icml2026}

\usepackage{amsmath}
\usepackage{amssymb}
\usepackage{mathtools}
\usepackage{amsthm}

\usepackage[capitalize,noabbrev]{cleveref}

\theoremstyle{plain}
\newtheorem{theorem}{Theorem}[section]
\newtheorem{proposition}[theorem]{Proposition}
\newtheorem{lemma}[theorem]{Lemma}
\newtheorem{corollary}[theorem]{Corollary}
\theoremstyle{definition}
\newtheorem{definition}[theorem]{Definition}
\newtheorem{assumption}[theorem]{Assumption}
\theoremstyle{remark}
\newtheorem{remark}[theorem]{Remark}

\usepackage[textsize=tiny]{todonotes}

\icmltitlerunning{Clustering as Reasoning: A $k$-Means Interpretation of Chain-of-Thought Graph Learning}

\begin{document}

\twocolumn[
  \icmltitle{Clustering as Reasoning: A $k$-Means Interpretation of Chain-of-Thought Graph Learning}



  \icmlsetsymbol{equal}{*}
  \icmlsetsymbol{Cor}{$\dagger$}
  \icmlsetsymbol{visit}{\S}

  \begin{icmlauthorlist}
    \icmlauthor{Xuanting Xie}{visit,equal,111,222}
    \icmlauthor{Zhaochen Guo}{equal,111}
    \icmlauthor{Bingheng Li}{equal,333}
    \icmlauthor{Xingtong Yu}{444}
    \icmlauthor{Zhifei Liao}{111}
    \icmlauthor{Zhao Kang}{111,Cor}
    \icmlauthor{Yuan Fang}{222,Cor}
  \end{icmlauthorlist}
  

  \icmlaffiliation{111}{University of Electronic Science and Technology of China, 611731, Chengdu, China}
  \icmlaffiliation{222}{Singapore Management University, 188065, Singapore}
  \icmlaffiliation{333}{Michigan State University, East Lansing, 48824, MI, USA}
   \icmlaffiliation{444}{The Chinese University of Hong Kong, Hong Kong SAR, China}

  \icmlcorrespondingauthor{Xuanting Xie}{x624361380@outlook.com}
  \icmlcorrespondingauthor{Zhao Kang}{zkang@uestc.edu.cn}
  \icmlcorrespondingauthor{Yuan Fang}{yfang@smu.edu.sg}

  \icmlkeywords{Machine Learning, ICML}

  \vskip 0.3in
]



\printAffiliationsAndNotice{$\S$Work done while visiting Singapore Management University.\ \icmlEqualContribution\ $\dagger$Corresponding authors.}

\begin{abstract}
Chain-of-Thought (CoT) prompting has shown promise in enhancing the reasoning capabilities of large language models (LLMs) on text-attributed graphs (TAGs). This work reframes CoT-based graph learning through the principle of clustering as reasoning, offering a $k$-means interpretation of how iterative reasoning operates over graph-structured data. We observe that existing graph CoT methods rely on disjoint architectures and fixed graph representations, limiting step-by-step semantic-topological interaction and interpretability. To overcome this limitation, we propose a unified framework named \model\ that integrates CoT reasoning with graph representation learning. Our key theoretical result reveals a formal mathematical correspondence between a Transformer block and the $k$-means algorithm, allowing reasoning to be interpreted as iterative assignment and update steps. Based on this insight, we introduce a Semantic Discriminating Prompt that explicitly formulates these steps as structured CoT reasoning, together with a structure-grounded alignment strategy to fuse topological priors with evolving thought-conditioned representations. Experiments on standard benchmarks demonstrate consistent improvements over state-of-the-art methods, validating clustering as a principled mechanism for CoT-based graph learning.
\end{abstract}
\vspace{10pt}

\input{sec-introduction}
\input{sec-relatedwork}
\input{sec-preliminaries}
\input{sec-method}

\input{sec-experiments}

\input{sec-conclusion}

\bibliography{example_paper}
\bibliographystyle{icml2026}

\newpage
\appendix
\onecolumn


\section{Theorical Analysis}
\subsection{Proof of Proposition \ref{prop:llm-kmeans}}
\label{sec:theory-kmeans-transformer}
\begin{proposition}[Transformer as $k$-Means Clustering Mechanism]
For any input representations, there exists a parameterization of a Transformer self-attention layer such that its attention weights form an $\epsilon$-approximation of soft $k$-means clustering assignments. Moreover, under a specific construction, this approximation becomes exact with $\epsilon = 0$. As a result, stacking such self-attention layers induces an iterative assignment–update process that mirrors the dynamics of soft clustering methods.
\end{proposition}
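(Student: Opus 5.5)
The plan is to make the correspondence concrete through a single algebraic identity. Soft $k$-means with centroids $\mu_1,\dots,\mu_k$ and temperature $\beta>0$ assigns a point $x_i$ to cluster $j$ with responsibility
\[
r_{ij}=\frac{\exp\!\bigl(-\beta\|x_i-\mu_j\|^2\bigr)}{\sum_{l}\exp\!\bigl(-\beta\|x_i-\mu_l\|^2\bigr)}.
\]
Expanding $\|x_i-\mu_j\|^2=\|x_i\|^2-2x_i^\top\mu_j+\|\mu_j\|^2$ shows that the term $\|x_i\|^2$ is constant in $j$ and cancels in the softmax. Hence
\[
r_{ij}=\operatorname{softmax}_j\!\bigl(2\beta\,x_i^\top\mu_j-\beta\|\mu_j\|^2\bigr).
\]
This already has the form of a self-attention row $\operatorname{softmax}_j(q_i^\top k_j/\sqrt d)$, except for a key-dependent bias. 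So the whole proof reduces to realizing this bias inside a standard parameterization.

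For the exact ($\epsilon=0$) construction, take the token sequence to consist of the $n$ input representations together with $k$ centroid tokens; these can be prompt tokens or the thought tokens of the Semantic Discriminating Prompt. I append one constant coordinate to each token via the embedding, $\tilde x=(x,1)$. Set $W_Q$ so that $q_i=\sqrt d\,(2\beta x_i,\,1)$, and $W_K$ so that a centroid token gives $k_j=(\mu_j,\,-\beta\|\mu_j\|^2)$. The norm $\|\mu_j\|^2$ can be stored as an extra coordinate of the centroid token, or computed by a preceding MLP sublayer. Then $q_i^\top k_j/\sqrt d$ reproduces the logits exactly.

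Attention must also be restricted to the centroid keys. I would do this with a block attention mask, which is standard in prompt-based Transformers. Under this restriction the attention matrix equals $R=(r_{ij})$ exactly, giving $\epsilon=0$.

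For the general $\epsilon$-statement without masks or an extra coordinate, I would give each non-centroid key a logit offset of $-M$ through a marker coordinate. Masking then holds up to total weight at most $n e^{-M}\big/\bigl(n e^{-M}+k e^{-B}\bigr)$, where $B$ bounds the logit range. This is at most $\epsilon$ once $M\ge B+\log(n/(k\epsilon))$, which is possible because the inputs are finite and hence bounded. If bias-free keys are required, the term $\|\mu_j\|^2$ is approximated instead by noting that it vanishes for normalized (LayerNorm-ed) centroids. The same softmax perturbation bound, which is Lipschitz in the logits in $\ell_\infty$, then controls the error.

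Next comes the update step. Choose $W_V$ so that the value at data token $i$ is $x_i$, and reverse the roles: centroid tokens query the data tokens with the same logits. Because the normalization is then over $i$, the attention output equals $\sum_i \tilde r_{ij}x_i\big/\sum_i \tilde r_{ij}$, which is the soft $k$-means centroid update in softmax-over-points form. A residual connection with a projection that replaces the old $\mu_j$ by the attention output can be realized by $W_O$ together with an MLP that zeroes the old coordinates. Composing one assignment layer and one update layer gives one EM/soft-$k$-means iteration. By induction over depth, $L$ such block pairs produce exactly the iterates $\mu^{(t+1)}=\Phi(\mu^{(t)})$, which establishes the ``stacking'' claim.

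I expect the main obstacle to be the normalization mismatch. Attention normalizes over keys (over clusters $j$), while the centroid mean normalizes over points $i$. A single head cannot do both, so the update layer must use the transposed query/key roles. Alternatively, the column sums $\sum_i r_{ij}$ can be carried as an extra value coordinate and divided out by the MLP, which only approximates division, up to arbitrary $\epsilon$ on a compact domain by universal approximation. I would present the exact version with transposed roles and mention the MLP route as the $\epsilon$-approximate alternative.
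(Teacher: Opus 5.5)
Your assignment step is the paper's proof. You expand $\|x_i-\mu_j\|_2^2$, discard $\|x_i\|_2^2$ by shift-invariance of the softmax over $j$, and confine data queries to the centroid keys; your block mask is the paper's $B_{i,j}=-\infty$ on data keys. The one difference is where $-\|\mu_j\|_2^2$ lives. The paper leaves $X$ untouched, takes $W_Q=2I_d$, $W_K=I_d$, and puts $b_{n+k}=-\|\mu_k\|_2^2$ into an additive key-dependent bias. You instead move this input-dependent quantity into the token through an augmented coordinate. That is closer to a standard Transformer and better suited to iteration, but it augments the input, which the paper explicitly avoids. The paper stops at $\varepsilon=0$, which already gives the $\varepsilon$-claim, and offers no construction at all for the stacking sentence. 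So your soft-mask argument and your update layer go beyond what it proves.

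The update layer, however, does not compute soft $k$-means. With centroid $j$ querying data keys $i$, the weights are $e^{s_{ji}}/\sum_{i'}e^{s_{ji'}}$. Matching $\sum_i r_{ij}x_i/\sum_i r_{ij}$ requires $e^{s_{ji}}\propto_i r_{ij}=e^{-\beta\|x_i-\mu_j\|^2}/Z_i$ with $Z_i=\sum_l e^{-\beta\|x_i-\mu_l\|^2}$, i.e.\ $s_{ji}=2\beta x_i^\top\mu_j-\beta\|x_i\|^2-\log Z_i+c_j$. ``The same logits'' miss two terms. First, $-\beta\|x_i\|^2$ no longer cancels, because it now varies with the summation index. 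Second, the per-point normalizer $Z_i$, which encodes the competition between clusters, is absent.

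Even after restoring the first term, you obtain $\sum_i\tilde r_{ij}x_i/\sum_i\tilde r_{ij}$ with the unnormalized kernel $\tilde r_{ij}=e^{-\beta\|x_i-\mu_j\|^2}$. That is an independent Gaussian mean-shift step for each centroid, under which two centroids in the same basin collapse to the same mode. It is not the soft $k$-means M-step, so your induction to ``exactly the iterates $\mu^{(t+1)}=\Phi(\mu^{(t)})$'' fails.

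A repair must supply $\log r_{ij}$ (or $\log Z_i$) to the update logits. For example, the assignment layer could write the responsibilities $r_{i\cdot}$ into token $i$ via one-hot centroid values. The update layer would then use keys whose $j$-th coordinate is $\log r_{ij}$ against one-hot centroid queries. The repair must also recompute $\|\mu_j\|^2$ for the new centroids before the next assignment. With ReLU MLPs, both the logarithm and the square can only be approximated on a compact domain. So the stacking claim becomes an $\varepsilon$-statement, or you settle for the paper's informal ``mirrors''.
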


\paragraph{Proof}

For this proof, we provide a rigorous theoretical justification for the proposition in the main text that a Transformer-style attention mechanism can approximate the iterative optimization dynamics of the $k$-means clustering algorithm. Our analysis focuses exclusively on $k$-means and relies solely on weight and bias constructions, without modifying or augmenting the input representations.

\paragraph{Transformer Block Formulation}

We consider a standard Transformer block as used in large language models. Given input token embeddings
\[
X = [x_1^\top,\dots,x_n^\top]^\top \in \mathbb{R}^{n\times d},
\]
the self-attention module computes
\begin{align}
Q &= X W_Q, \\
K &= X W_K, \\
V &= X W_V,
\end{align}
where $W_Q,W_K,W_V \in \mathbb{R}^{d\times d}$ are learnable projection matrices.

The attention output is given by
\begin{equation}
\label{eq:attn}
A^\mathrm{attn}
=
\mathrm{softmax}\!\left(\frac{QK^\top + B}{\tau}\right)V,
\end{equation}
where $B\in\mathbb{R}^{n\times n}$ denotes an additive attention bias matrix, and $\tau>0$ is a temperature parameter. We assume that $B$ is \emph{key-dependent}, i.e., $B_{ik}=b_k$ for some vector $b\in\mathbb{R}^n$, consistent with relative position bias and attention bias mechanisms used in modern LLMs.

\paragraph{$k$-Means as an Assignment-Update Procedure}

Given a set of cluster centers $\{\mu_k\}_{k=1}^K \subset \mathbb{R}^d$, the soft $k$-means assignment of a point $x_i$ to cluster $k$ is defined as
\begin{equation}
\label{eq:soft-kmeans}
\frac{\exp(-\|x_i-\mu_k\|_2^2/\tau)}
{\sum_{k'}\exp(-\|x_i-\mu_{k'}\|_2^2/\tau)}.
\end{equation}

\begin{lemma}
\label{lem:softmax-shift}
For any vector $z \in \mathbb{R}^K$ and scalar $c \in \mathbb{R},\:
\mathrm{softmax}(z) = \mathrm{softmax}(z + c \mathbf{1}).
$
\end{lemma}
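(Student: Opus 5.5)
The plan is to prove Lemma~\ref{lem:softmax-shift} directly from the definition of the softmax, $\mathrm{softmax}(z)_k = \exp(z_k)/\sum_{k'=1}^K \exp(z_{k'})$ for $k=1,\dots,K$. The idea is that adding the same constant $c$ to every coordinate multiplies every exponential by the same positive factor $e^{c}$, which then cancels between numerator and denominator.

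Concretely, for each index $k$ I will write
\[
\mathrm{softmax}(z+c\mathbf{1})_k=\frac{\exp(z_k+c)}{\sum_{k'}\exp(z_{k'}+c)}=\frac{e^{c}\exp(z_k)}{e^{c}\sum_{k'}\exp(z_{k'})}=\mathrm{softmax}(z)_k,
\]
using $\exp(a+b)=\exp(a)\exp(b)$ and pulling the common factor $e^{c}$ out of the finite sum. The cancellation is legitimate because $e^{c}>0$, and the denominator $\sum_{k'}\exp(z_{k'})$ is strictly positive, so every expression is well defined. Since the identity holds coordinatewise for all $k$, the two vectors coincide.

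There is no real obstacle; the only points to state explicitly are that softmax is well defined (a positive denominator) and that the shift must be by a \emph{constant} vector. If the shift varied across coordinates, the factors would not cancel. This is exactly why the lemma is useful afterwards: in the soft $k$-means assignment~\eqref{eq:soft-kmeans}, expanding $-\|x_i-\mu_k\|_2^2=-\|x_i\|_2^2+2x_i^\top\mu_k-\|\mu_k\|_2^2$ yields the term $-\|x_i\|_2^2$, which is constant in $k$ and can therefore be dropped. The remaining logits, $2x_i^\top\mu_k-\|\mu_k\|_2^2$, are a query–key inner product plus a key-dependent bias $b_k$, which matches the form in~\eqref{eq:attn}.
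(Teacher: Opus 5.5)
Your proof is correct, and it is the standard argument: the common positive factor $e^{c}$ factors out of the numerator and the denominator and cancels. The paper states this lemma without proof, and your account of its use matches the paper's: with $W_Q=2I_d$, $W_K=I_d$ and key bias $b_{n+k}=-\|\mu_k\|_2^2$, the lemma removes the $k$-independent term $\|x_i\|_2^2/\tau$ from the row-wise softmax. Dividing by the temperature $\tau$ does not make that term depend on $k$.
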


We formalize what it means for a Transformer attention layer to approximate a $k$-means assignment step.

\begin{definition}[$\varepsilon$-approximation of soft assignment]
Fix $K$ centers $\{\mu_k\}_{k=1}^K$ and data points $\{x_i\}_{i=1}^n$.
Let $A^{\mathrm{kmeans}}\in\mathbb{R}^{n\times K}$ denote the soft $k$-means assignment matrix with
\[
A^{\mathrm{kmeans}}_{ik}
=
\frac{\exp(-\|x_i-\mu_k\|_2^2/\tau)}
{\sum_{k'}\exp(-\|x_i-\mu_{k'}\|_2^2/\tau)}.
\]
Let $A^{\mathrm{attn}}\in\mathbb{R}^{n\times K}$ denote the attention weights from the $n$ data queries to the $K$ center keys produced by a Transformer attention layer.
We say the attention layer $\varepsilon$-approximates the soft $k$-means assignment if
\[
\|A^{\mathrm{attn}} - A^{\mathrm{kmeans}}\|_{\infty} \le \varepsilon,
\]
where $\|\cdot\|_{\infty}$ denotes the entry-wise max norm.
\end{definition}

We provide an explicit parameter construction.

We treat the input $X=[X_{\mathrm{data}};X_{\mathrm{ctr}}]$ as fixed, where $X_{\mathrm{data}}=[x_1^\top,\dots,x_n^\top]^\top$ and $X_{\mathrm{ctr}}=[\mu_1^\top,\dots,\mu_K^\top]^\top$.
We set an additive attention bias $B$ such that each data token attends only to the $K$ center tokens:
\[
B_{i,(n+k)} = b_{n+k}\quad (i\in[n],\,k\in[K]), 
\qquad
B_{i,j} = -\infty\quad (i\in[n],\, j\in[n]).
\]
This ensures the row-wise softmax for data queries is supported only on the center keys.

We choose
\[
W_Q = 2I_d,\qquad W_K = I_d.
\]
Then for a data query $i$ and a center key $k$ (indexed as token $n+k$), the unnormalized score is
\[
\frac{1}{\tau}\Big(Q_i^\top K_{n+k} + B_{i,(n+k)}\Big)
=
\frac{1}{\tau}\Big(2x_i^\top \mu_k + b_{n+k}\Big).
\]

We set the key-dependent bias for center keys as
\[
b_{n+k} = -\|\mu_k\|_2^2.
\]
Then
\[
2x_i^\top \mu_k - \|\mu_k\|_2^2
=
-\|x_i-\mu_k\|_2^2 + \|x_i\|_2^2.
\]
The term $\|x_i\|_2^2/\tau$ is independent of $k$, hence it vanishes under the row-wise softmax by Lemma~\ref{lem:softmax-shift}.
Therefore,
\[
\mathrm{softmax}_{k\in[K]}
\!\left(
\frac{2x_i^\top \mu_k - \|\mu_k\|_2^2}{\tau}
\right)
=
\mathrm{softmax}_{k\in[K]}
\!\left(
\frac{-\|x_i-\mu_k\|_2^2}{\tau}
\right),
\]
which matches exactly the soft $k$-means assignment in~\eqref{eq:soft-kmeans}.
Thus $\|A^{\mathrm{attn}}-A^{\mathrm{kmeans}}\|_{\infty}=0$, i.e., $\varepsilon=0$.

\begin{remark}[Prompting as a Generalization of $k$-Means]
Since a Transformer block can realize $k$-means assignment as a special case, prompt-based reasoning in LLMs can be interpreted as inducing adaptive, context-dependent generalizations of $k$-means-style clustering, rather than replacing it with an unrelated mechanism.
\end{remark}
\subsection{Proof of Theorem \ref{prop:misalignment}}

\paragraph{Notation and the CoT Iteration}

We briefly restate the chain-of-thought (CoT) inference procedure using the notation of the main paper.
Let $G=(V,E,X)$ be a graph with node features $X$.
At reasoning iteration $t$, node representations are obtained by feeding the current features $X_t$ into a frozen graph encoder:
\[
H^{(t)} = \mathrm{GraphEncoder}(X_t, \mathcal{G}; \Theta_0),
\]
where $\Theta_0$ is fixed during the entire CoT inference process.

For each node $i$, CoT constructs two types of neighborhoods.
The first is a fixed \emph{structural neighborhood} $N_i^{\mathrm{str}}$, sampled from the graph topology (e.g., one- or two-hop neighbors).
The second is a \emph{semantic neighborhood} $N_i^{(t)}$, obtained by $k$-nearest-neighbor (KNN) retrieval in the representation space $H^{(t)}$.

Given these neighborhoods, CoT applies a prompt to separately summarize information from $N_i^{\mathrm{str}}$ and $N_i^{(t)}$, producing two textual descriptions.
These descriptions are encoded and concatenated into a reasoning state, which is then mapped by a condition network to a modulation matrix $P^{(t)}$.
Finally, node features are updated by element-wise modulation:
\[
X_{t+1} = P^{(t)} \odot X.
\]
This completes one CoT iteration, and the process is repeated for a small number of steps.

We now formalize the notion of semantic--structural misalignment that motivates CoT.

\begin{definition}[Semantic--Structural Misalignment]
Let $N_i^{\mathrm{str}}$ be the structural neighborhood of node $i$, and let $N_i^{(t)}$ be its semantic neighborhood obtained by KNN retrieval on $H^{(t)}$.
We define the per-node misalignment as
\[
\delta_i^{(t)}
\;:=\;
1 - \frac{|N_i^{\mathrm{str}} \cap N_i^{(t)}|}{|N_i^{\mathrm{str}} \cup N_i^{(t)}|},
\]

and the global misalignment score as
\[
\Delta_t \;:=\; \mathbb{E}_{i\sim V}\big[\delta_i^{(t)}\big].
\]
\end{definition}

The quantity $\Delta_t \in [0,1]$ measures the degree to which semantic similarity (captured by representation-space neighbors) disagrees with graph structure.
A smaller $\Delta_t$ indicates better alignment between semantic and structural neighborhoods.
This metric is label-free and directly reflects the consistency between the two information sources used by CoT.

\paragraph{Assumptions Supported by Visualization}

To analyze the behavior of CoT, we introduce two assumptions that are intentionally weak and empirically supported by visualization, rather than by strict distributional guarantees.

\begin{assumption}\label{assumption1}
There exists a constant $\kappa > 0$ such that small changes in node representations lead to proportionally small changes in their KNN neighborhoods, i.e.,
\[
\mathbb{E}_{i}\!\left[
\frac{|N_i^{\mathrm{knn}}(H) \triangle N_i^{\mathrm{knn}}(\widetilde H)|}{K}
\right]
\;\le\;
\kappa\,\mathbb{E}_{i}\|h_i - \tilde h_i\|.
\]
\end{assumption}

This assumption states that when embeddings evolve smoothly, semantic neighborhoods do not change abruptly.
Such behavior is commonly observed in practice and can be visually supported by the gradual evolution of embedding clusters under t-SNE.

\begin{assumption}\label{assumption2}
There exist constants $0 < \lambda < 1$ and $\epsilon \ge 0$ such that the CoT update moves node representations closer to the semantic consensus of their structural neighborhoods:
\[
\mathbb{E}_i\!\left[
\mathrm{dist}\!\left(
h_i^{(t+1)},\,
\mathrm{Conv}\{h_j^{(t)} : j \in N_i^{\mathrm{str}}\}
\right)
\right]
\;\le\;
\lambda\,
\mathbb{E}_i\!\left[
\mathrm{dist}\!\left(
h_i^{(t)},\,
\mathrm{Conv}\{h_j^{(t)} : j \in N_i^{\mathrm{str}}\}
\right)
\right]
+ \epsilon.
\]
\end{assumption}

This assumption reflects the design of CoT: semantic reasoning is injected through prompts and condition-network modulation, but remains anchored by graph structure, preventing unconstrained semantic drift.

\paragraph{Main Theorem}
We now state the main theoretical result.
\begin{theorem}[CoT Contracts Semantic--Structural Misalignment]
There exist constants $0 < \rho < 1$ and $\varepsilon \ge 0$ such that
\[
\Delta_{t+1} \le \rho\,\Delta_t + \varepsilon
\quad \text{for all } t \in \mathbb{Z}^+.
\]
In particular, when $\varepsilon$ is small, the misalignment decreases geometrically until it reaches an error floor $\mathcal{O}(\varepsilon)$.
\end{theorem}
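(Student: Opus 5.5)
The plan is to relate the misalignment $\Delta_t$ to a geometric quantity we can contract. Define the representation-level misalignment
\[
D_t := \mathbb{E}_i\!\left[\mathrm{dist}\!\left(h_i^{(t)},\,\mathrm{Conv}\{h_j^{(t)}: j\in N_i^{\mathrm{str}}\}\right)\right].
\]
Assumption~\ref{assumption2} gives the one-step contraction $D_{t+1}\le \lambda D_t+\epsilon$ directly. Assumption~\ref{assumption1} converts representation distances into changes of KNN neighborhoods. We then need two comparison inequalities that sandwich $\Delta_t$ between affine functions of $D_t$.

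The first comparison is an upper bound, $\Delta_t \le c_1 D_t + e_1$. For each $i$, let $\tilde h_i$ be the projection of $h_i^{(t)}$ onto the structural convex hull, and let $\widetilde H$ be $H^{(t)}$ with row $i$ replaced in this way.

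\begin{itemize}
\item Applying Assumption~\ref{assumption1} to $(H^{(t)},\widetilde H)$ bounds $|N_i^{(t)}\triangle N_i^{\mathrm{knn}}(\widetilde H)|/K$ by $\kappa\,\mathrm{dist}(\cdot)$ in expectation.
\item The Jaccard distance is controlled by the normalized symmetric difference, since $1-|A\cap B|/|A\cup B| = |A\triangle B|/|A\cup B| \le |A\triangle B|/K$.
\item By the triangle inequality for Jaccard distance, $\delta_i^{(t)}$ is at most the Jaccard distance between $N_i^{\mathrm{str}}$ and $N_i^{\mathrm{knn}}(\widetilde H)$, plus $\kappa$ times the projection distance.
\end{itemize}

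The first term is a structural floor $e_1$: a point lying in the hull of its structural neighbors still need not have them as exact KNN neighbors. It is absorbed into the error.

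The second comparison is a reverse bound, $D_t \le c_2\Delta_t + e_2$. Here I would argue that if the KNN set largely coincides with $N_i^{\mathrm{str}}$, then $h_i$ is within the KNN radius of those structural neighbors, so its distance to their hull is bounded by the KNN radius times the fraction of mismatch. Chaining the two comparisons with Assumption~\ref{assumption2} gives
\[
\Delta_{t+1}\le c_1D_{t+1}+e_1\le c_1\lambda D_t + c_1\epsilon+e_1 \le c_1c_2\lambda\,\Delta_t + \varepsilon,
\]
so $\rho=c_1c_2\lambda$. Unrolling the recursion then gives $\Delta_t\le\rho^t\Delta_0+\varepsilon/(1-\rho)$, which is the stated geometric decrease to an $\mathcal{O}(\varepsilon)$ floor.

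The main obstacle is guaranteeing $\rho<1$, since $c_1c_2$ is not obviously at most $1/\lambda$. The reverse comparison is also not implied by the stated assumptions; it needs something like a bounded-radius or bounded-embedding-norm condition.

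\begin{itemize}
\item My first attempt would use the boundedness of $\Delta_t\in[0,1]$: pick $\rho=\max(\lambda',\cdot)$ and shove any excess into $\varepsilon$. With $\varepsilon\ge 1-\rho$ the inequality is trivially true, but this is vacuous.
\item The honest route is to add a mild regularity assumption, namely that KNN radii are bounded by $r$ and that $\kappa r\lambda<1$. Under it, $c_1=\kappa$ and $c_2\approx r$, which gives a genuine contraction.
\end{itemize}
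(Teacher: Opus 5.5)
\paragraph{Relation to the paper, and the first gap.} Your plan follows the same route as the paper's proof, but quantitatively: Assumption~\ref{assumption1} turns representation movement into KNN-neighborhood movement, Assumption~\ref{assumption2} contracts distance to the structural hull, and $\rho$ is built from $\kappa$ and $\lambda$. The paper's proof is only a qualitative sketch of this route. It asserts that ``combining these two observations'' yields larger overlap with $N_i^{\mathrm{str}}$. It never writes either comparison inequality, never bounds the representation drift $\mathbb{E}_i\|h_i^{(t+1)}-h_i^{(t)}\|$ it invokes, and never checks $\rho<1$. So it does not supply what you are missing.

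Your proposal, as you partly concede, does not prove the statement from the stated assumptions. Some steps you treat as safe also fail. First, Assumption~\ref{assumption2} does not give $D_{t+1}\le\lambda D_t+\epsilon$. Its left-hand side is $\mathbb{E}_i\,\mathrm{dist}\bigl(h_i^{(t+1)},\mathrm{Conv}\{h_j^{(t)}:j\in N_i^{\mathrm{str}}\}\bigr)$, with the hull frozen at step $t$. Your comparison $\Delta_{t+1}\le c_1D_{t+1}+e_1$ instead needs the hull of the updated $h_j^{(t+1)}$, because $N_i^{(t+1)}$ is retrieved in $H^{(t+1)}$. The two distances can differ by as much as $\max_{j\in N_i^{\mathrm{str}}}\|h_j^{(t+1)}-h_j^{(t)}\|$. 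Nothing in the hypotheses makes this drift small; each step re-modulates the raw features via $X_{t+1}=P^{(t)}\odot X$.

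\paragraph{The floor $e_1$ and the reverse comparison.} Second, the floor $e_1$ is not a lower-order term, and your added regularity condition does not touch it, since it only controls $\rho$. Take frozen embeddings made of tight, well-separated clusters. Let each node's structural neighbors lie in the surrounding clusters, so $h_i$ sits (up to boundary clusters) inside their hull, while its $K$ nearest neighbors are its own cluster-mates. Then:
\begin{itemize}
\item Assumption~\ref{assumption2} holds with $\epsilon\approx0$ and the embeddings never move.
\item The KNN radii are tiny, so your condition $\kappa r\lambda<1$ does not exclude this configuration.
\item Yet $\Delta_t\equiv1$, so any valid constants need $\varepsilon\ge1-\rho$.
\end{itemize}
The same example shows that the paper's step ``contraction toward the hull $\Rightarrow$ larger overlap'' is invalid. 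What is really missing is a hypothesis linking proximity to the structural hull with KNN overlap (i.e.\ a small $e_1$), and that is close to assuming the conclusion.

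Third, your heuristic for the reverse comparison is off. $\delta_i^{(t)}=0$ only places $h_i$ within the KNN radius of its structural neighbors, not inside their hull. $\delta_i^{(t)}=1$ gives no control short of a bound on the embedding diameter. So $c_2$ or $e_2$ must be of the order of that diameter, and the condition you would need is $\kappa\lambda\,\mathrm{diam}(H)\ll1$, not $\kappa r\lambda<1$.

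Finally, a small slip. Since only existence is asserted, the statement is trivially true with $\varepsilon=1$, but not with $\varepsilon=1-\rho$: a jump from $\Delta_t=0$ to $\Delta_{t+1}=1$ violates it. The nontrivial content is the small-$\varepsilon$ regime, which neither your chain nor the paper's sketch delivers.
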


\begin{proof}
By definition, changes in $\Delta_t$ are entirely determined by how the semantic neighborhoods $N_i^{(t)}$ evolve relative to the fixed structural neighborhoods $N_i^{\mathrm{str}}$.
Therefore, controlling $\Delta_{t+1} - \Delta_t$ reduces to controlling the drift of KNN neighborhoods induced by the representation update.

By the stability of semantic neighborhood retrieval, the expected change in $N_i^{(t)}$ is bounded by the expected representation change
$\mathbb{E}_i \|h_i^{(t+1)} - h_i^{(t)}\|$.

By the structure-anchored update assumption, the CoT update contracts the deviation of each representation from the convex hull of its structural neighborhood, up to an additive noise term.
As a result, representations that are inconsistent with the graph structure are progressively suppressed.

Combining these two observations implies that semantic neighborhoods retrieved from $H^{(t+1)}$ have larger expected overlap with $N_i^{\mathrm{str}}$ than those retrieved from $H^{(t)}$.
This yields a contraction of the form
\[
\Delta_{t+1} \le \rho\,\Delta_t + \varepsilon,
\]
where $\rho$ depends on the stability constant $\kappa$ and the contraction factor $\lambda$, and $\varepsilon$ aggregates higher-order and noise terms.
\end{proof}

\paragraph{Discussion: }

While it is generally difficult to derive a tight, label-dependent generalization bound for graph neural networks, the misalignment score $\Delta_t$ provides an interpretable proxy for representation quality.
When $\Delta_t$ is large, semantic neighborhoods disagree with graph structure, causing message passing to aggregate semantically inconsistent neighbors.
This leads to cross-class contamination in node representations and blurs class boundaries.

As CoT iterations reduce $\Delta_t$, semantic and structural neighborhoods become increasingly consistent.
Consequently, message passing aggregates information from more semantically compatible nodes, reducing cross-class mixing and improving representation separability.
Empirically, this effect can be visualized using t-SNE, where node clusters become tighter and less overlapping as $t$ increases, aligning with improved classification accuracy.


\section{Prompt Design}
This section provides templates of prompts, including the instructions for both KNN of envolving representations and local graph topology.

\begin{mystepbox}{Semantic Discriminating Prompt}
Given the central node \textcolor{blue}{\textless Target node\textgreater}. The selected candidates (based on representation similarity) are [\textcolor{blue}{\textless $t$-th round of KNN node \#1\textgreater}, \textcolor{blue}{\textless $t$-th round of KNN node \#2\textgreater}, \dots, \textcolor{blue}{\textless $t$-th round of KNN node \#N\textgreater}]

or: Given the central node \textcolor{blue}{\textless Target node\textgreater}. The selected candidates (based on neighbors) are [\textcolor{blue}{\textless neighbor node \#1\textgreater}, \textcolor{blue}{\textless neighbor node \#2\textgreater}, \dots, \textcolor{blue}{\textless neighbor node \#N\textgreater}]

Similar to cluster assignment in $k$-means, identify the shared aspects that contribute to their feature-space similarity, and discard nodes exhibiting low similarity.

Similar to moving centroids in $k$-means, state the derived insights in a single, concise, and dense paragraph.
Finally, integrate these insights into a compact, refined representation for the target node.
\end{mystepbox}

\section{Case Study}

We visualize the evolutionary reasoning process of a representative node from the \textsc{Cora} dataset to demonstrate the progressive refinement of the generated thoughts. Initially, the model distills the verbose \textit{Raw Text}—which contains extensive background on ``protein sequence patterns''—into a concise summary in \textit{Thought $t=1$}, effectively filtering out redundancy (e.g., general descriptions of HMMs) while isolating the core ``megaprior heuristic and the convex combination'' problem. At this stage, the connection to neighbors like Dirichlet Mixtures remains thematic but high-level. In the subsequent \textit{Thought $t=2$}, the reasoning is significantly sharpened: the model not only articulates the technical mechanism more precisely (emphasizing the variance adjustment) but also synthesizes the neighbors' information with greater semantic coherence. Specifically, it explicitly integrates Meta-MEME to reframe the context around ``motif-based hidden Markov models'' rather than generic homology. This trajectory confirms that the reasoning evolves from simple information extraction to a concentrated, discriminative synthesis through iterative updates.

\begin{mystepbox}{Raw Text Attributes}
Label: Neural Networks.
Title: The Megaprior Heuristic for Discovering Protein Sequence Patterns. Abstract: Several computer algorithms for discovering patterns in groups of protein sequences are in use that are based on fitting the parameters of a statistical model to a group of related sequences. These include hidden Markov model (HMM) algorithms for multiple sequence alignment, and the MEME and Gibbs sampler algorithms for discovering motifs. These algorithms are sometimes prone to producing models that are incorrect because two or more patterns have been combined. The statistical model produced in this situation is a convex combination (weighted average) of two or more different models. This paper presents a solution to the problem of convex combinations in the form of a heuristic based on using extremely low variance Dirichlet mixture priors as part of the statistical model. This heuristic, which we call the megaprior heuristic, increases the strength (i.e., decreases the variance) of the prior in proportion to the size of the sequence dataset. This causes each column in the final model to strongly resemble the mean of a single component of the prior, regardless of the size of the dataset. We describe the cause of the convex combination problem, analyze it mathematically, motivate and describe the implementation of the megaprior heuristic, and show how it can effectively eliminate the problem of convex combinations in protein sequence pattern discovery. 
\end{mystepbox}

\begin{mystepbox}{{Thought $t$=1}}
The central node discusses a heuristic called the ``megaprior heuristic'' for discovering protein sequence patterns, which addresses the issue of convex combinations in protein sequence pattern discovery. Convex combinations occur when two or more patterns have been combined in a statistical model, leading to incorrect results. To solve this problem, the megaprior heuristic uses extremely low-variance Dirichlet mixture priors as part of the statistical model. By increasing the strength of the prior in proportion to the size of the sequence dataset, the heuristic ensures that each column in the final model strongly resembles the mean of a single component of the prior, regardless of the size of the dataset. The selected content-based neighbors share similar themes related to improving the detection of weak but significant protein sequence homology. For instance, ``Dirichlet Mixtures: A Method for Improving Detection of Weak but Significant Protein Sequence Homology" presents the mathematical foundations of Dirichlet mixtures, which are used to improve database search results for homologous sequences. Another neighbor, ``Homology Detection via Family Pairwise Search," focuses on finding additional homologs.
\end{mystepbox}

\begin{mystepbox}{{Thought $t$=2}}
The central node discusses a heuristic called the ``megaprior heuristic" for discovering protein sequence patterns. This heuristic uses extremely low variance Dirichlet mixture priors as part of the statistical model to increase the strength of the prior in proportion to the size of the sequence dataset. By doing so, each column in the final model strongly resembles the mean of a single component of the prior, regardless of the size of the dataset. The selected content-based neighbors of the central node share similar themes of improving the detection of weak but significant protein sequence homology. For example, ``Dirichlet Mixtures: A Method for Improving Detection of Weak but Significant Protein Sequence Homology" presents the mathematical foundations of Dirichlet mixtures and their application in condensing the information in a protein database into a mixture of Dirichlet densities. Similarly, ``Meta-MEME: Motif-based Hidden Markov Models of Protein Families" and ``Homology Detection via Family Pairwise Search" both focus on improving the identification of protein homologs through motif analysis and hidden Markov modeling.
\end{mystepbox}

\end{document}

%% file: sec-introduction.tex
\section{Introduction}


\begin{figure}[t]
    \centering
    \includegraphics[width=.49\textwidth]{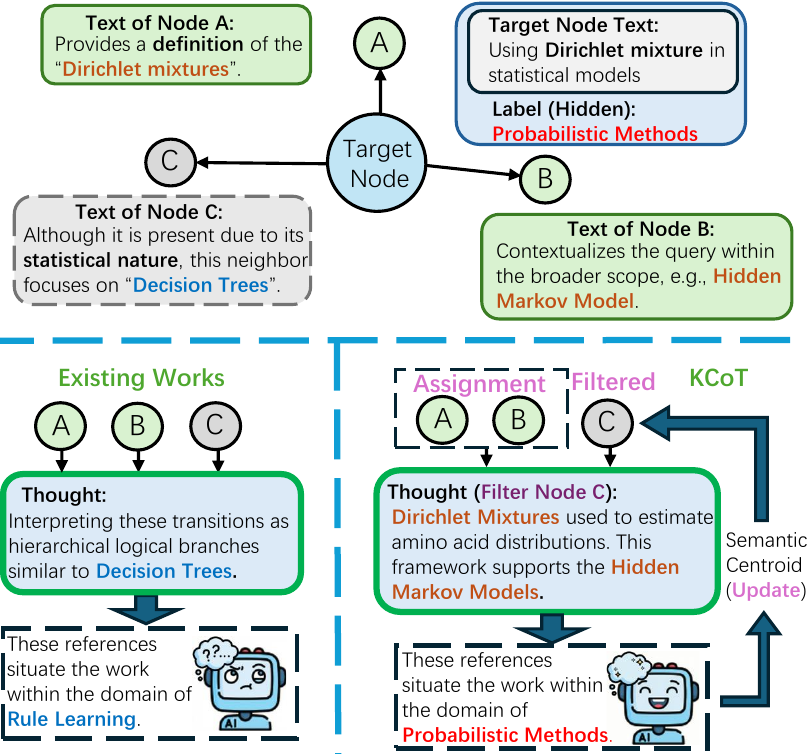}
    \caption{A toy example of the proposed prompt on Cora. The model effectively filters irrelevant neighbor C and focuses on identifying the salient semantic features, like ``Dirichlet Mixtures'' and ``Hidden Markov Models''.}
    \label{example1}
\end{figure}

Graph Chain-of-Thought (CoT) prompting has emerged as a promising paradigm for enhancing the reasoning capabilities of LLMs on Text-Attributed Graphs (TAGs). By decomposing complex problems into intermediate reasoning steps \cite{wei2022chain,wang2023self,chu2023survey}, CoT allows models to attain expert-level proficiency in sophisticated domains \cite{feng2023towards}. Initially, research focused on bridging the modality gap by translating graph topologies into natural language prompts \cite{jia2025hetgcot} or simulating reasoning steps within latent vector spaces for text-free graphs \cite{yu2025gcot}. Subsequently, LLMs are fine-tuned with explicit graph reasoning traces to enhance native topological understanding \cite{luo2024graphinstruct}, and multi-agent systems with tool-chaining have been employed to overcome context limitations and scale reasoning to industrial-sized graphs \cite{wei2025graphchain,DUALFormer,zhuocloser}. However, despite these achievements, the full potential of Graph CoT remains largely untapped due to two fundamental challenges, as follows.


First, existing Graph CoT paradigms typically employ disjoint, loosely coupled architectures that separate the processing of LLMs and Graph Neural Networks (GNNs) \cite{CGC, CDC, FPGC} into isolated stages. In these frameworks, LLMs function merely as an independent semantic parser and generator. Thus, the semantic reasoning process operates in a vacuum, detached from the structural propagation mechanism of GNNs \cite{chenlabel,BQN, SPC, RGSL}. Consequently, this prevents step-by-step semantic-topological interaction, hindering the seamless integration of structural constraints with textual reasoning and leading to suboptimal alignment between the two modalities.

Second, there is limited interpretability of the underlying reasoning mechanism. Current CoT learning often operates as a ``black box'', lacking geometric interpretability regarding how natural language reasoning drives the optimization of node representations. Existing approaches primarily instruct LLMs to think ``step by step'' without a clear theoretical grounding \cite{tang2024graphgpt}. 
Thus, it remains unclear how natural language prompting corresponds to a well-defined mathematical objective or how to design mechanisms that explicitly drive the iterative optimization of graph representations.
Without such grounding, it is difficult to reconcile the generated thoughts with the graph learning objective.


To address these challenges, we propose the $k$-means Interpretation of Chain-of-Thought Graph
Learning (\model), a novel framework that frames CoT-based graph learning through the principle of clustering as reasoning. To resolve the interpretability issue, we provide a theoretical analysis demonstrating that a Transformer block in LLMs admits a parameterization that is functionally equivalent to the $k$-means algorithm. Guided by this, we design a \emph{Semantic Discriminating Prompt} that explicitly reformulates the ``Assignment'' and ``Update'' steps of $k$-means into CoT reasoning steps. As illustrated in Fig.~\ref{example1}, unlike previous prompts that indiscriminately summarize all neighbors, our prompt acts as a semantic filter to screen unrelated nodes (``Assignment'') and distill the semantic centroids (``Update''). To further overcome the semantic-structural misalignment, we propose a structure-grounded thought strategy that synergizes explicit topological priors with induced semantic reasoning. A Condition-Net module is proposed to generate a reasoning matrix, which is conditioned on both graph structures and the evolving reasoning status. We further provide the theoretical analysis demonstrating the advantages of KCoT.  Experiments on standard benchmarks demonstrate consistent improvements over state-of-the-art methods, validating our proposed \model.

%% file: sec-relatedwork.tex
\section{Related Work}
\subsection{Graph Chain-of-Thought}

Building on the CoT paradigm, recent works such as GraphCoT \cite{jin2024graph} integrate the inherent relational structure of TAGs \cite{yan2023comprehensive,wen2023augmenting} to guide LLM reasoning. HetGCoT \cite{jia2025hetgcot} translates graph topologies into natural language prompts, while GCoT \cite{yu2025gcot} simulates reasoning steps within latent vector spaces for text-free graphs. GraphInstruct \cite{luo2024graphinstruct} fine-tunes LLMs with explicit graph reasoning traces to enhance topological understanding. Meanwhile, GraphChain \cite{wei2025graphchain} employs multi-agent systems and tool-chaining to overcome context limitations, scaling reasoning to industrial-sized graphs. GraphGPT \cite{tang2024graphgpt} aligns text with structural data, enabling the model to perform step-by-step reasoning. However, these methods treat CoT as a ``black box'', lacking interpretability regarding its underlying mechanisms.

\subsection{Large Language Models for Graphs}
Driven by the rapid evolution and generalization capabilities of Large Language Models (LLMs), there is significant interest in leveraging them to address transferability challenges in graph machine learning \cite{guo2023gpt4graph, he2025unigraph}. Initial attempts focused on linearizing graph structures into textual descriptions \cite{chen2024exploring,wang2023can,liu2023evaluating}. However, this approach often yields suboptimal performance due to the loss of structural information \cite{huang2023can}. Alternative paradigms utilize LLMs merely as feature enhancers \cite{xia2024opengraph,ye2024language} to augment node attributes or synthesize pseudo-labels. Because these frameworks rely on GNNs as the final predictor, they inherit the generalization limitations of GNNs, restricting cross-domain transferability. Consequently, recent research has pivoted towards utilizing LLMs as standalone graph predictors. For example, LLaGA \cite{pmlr-v235-chen24bh} introduces a projector-based encoding scheme to map graph structural data into token sequences compatible with the LLM embedding space. However, these methods often employ disjoint architectures that separate LLM and GNN processing into isolated stages. Consequently, this separation limits semantic–topological interaction, leading to suboptimal performance.

%% file: sec-preliminaries.tex
\section{Preliminaries}

\begin{figure*}[t]
    \centering
    \includegraphics[width=1.\linewidth]{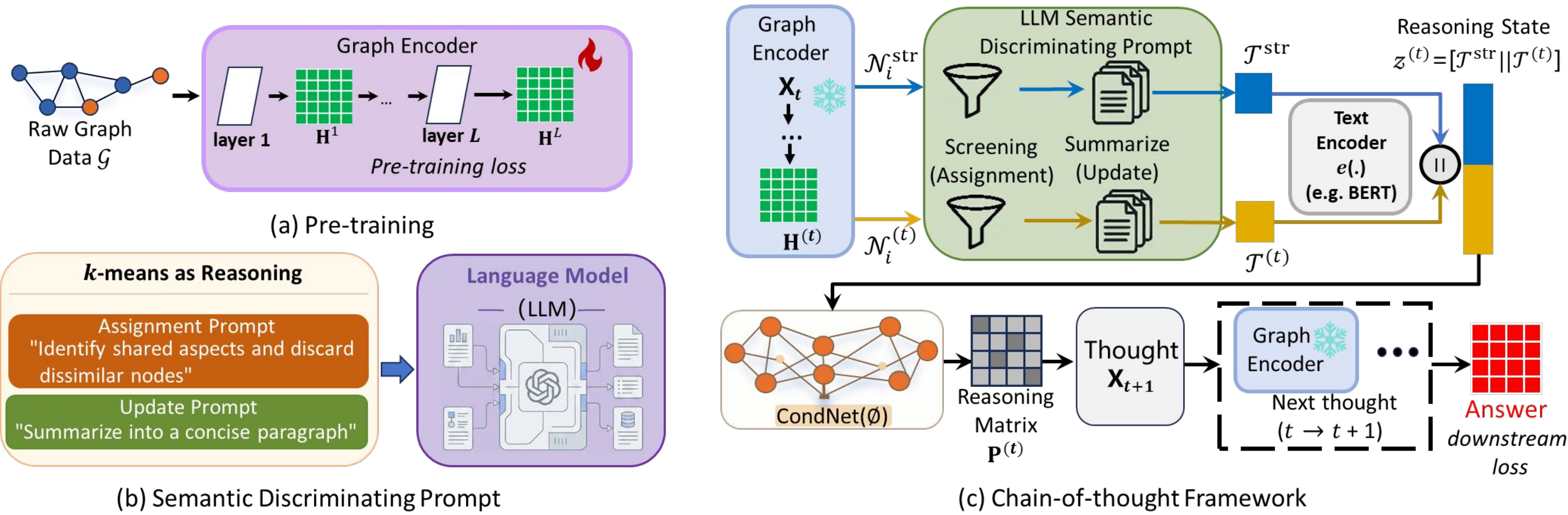}
    \caption{The overall framework of \model. It synergizes iterative CoT reasoning with graph representation learning. Specifically, we design a Semantic Discriminating Prompt that reformulates the $k$-means into explicit reasoning to refine node semantics. Simultaneously, a Structure-grounded Thoughts Construction strategy and a Condition-Net module dynamically fuse these evolving semantic thoughts with fixed topological priors to achieve semantic-structural alignment.}
    \label{all}
\end{figure*}

\stitle{Text-Attributed Graphs.} Formally, we define a graph as $\mathcal{G} = (\mathcal{V}, \mathcal{E}, \mathcal{X})$, where $\mathcal{V}$ and $\mathcal{E}$ represent the sets of nodes and edges, respectively. The edge set $\mathcal{E}$ captures the structural dependencies among the entities in $\mathcal{V}$. Additionally, $\mathcal{X}$ denotes the node feature set, where each node $v_i \in \mathcal{V}$ is associated with a specific feature embedding $\mathbf{X}_i \in \mathbb{R}^{d}$. This work specifically targets text-attributed graphs, wherein the attribute $\mathbf{X}_i$ consists of raw textual information $\mathbf{T}_i$ rather than simple numerical features.

\stitle{Graph Encoder.}
GNNs have established themselves as the predominant encoder architecture. Formally, let $\mathbf{H}^l$ denote the node embedding matrix at the $l$-th layer, where the $i$-th row $\mathbf{h}_i^l$ corresponds to the embedding of node $v_i$. The $L$ layer GNN update is defined as \cite{FPA,DGSDA,GRASS}:

\begin{equation}
    \mathbf{H}^L = \textsc{GraphEncoder}(\mathbf{X},\mathcal{G};\Theta),
\end{equation}
where $\Theta=(\theta^1,\ldots,\theta^L)$ denotes the weights of the these $L$ layers. For brevity, the final output $\mathbf{H}^L$ is referred to as $\mathbf{H}$.

\stitle{Pre-training.}
Recent studies \cite{yu2024generalized,JAM,fang2022structure,fang2025benefits,fang2026graph} demonstrate that mainstream contrastive pre-training tasks on graphs \cite{liu2023graphprompt,ROSEN,HEATS,fang2025homophily,fangsaga} can be unified under a generalized similarity calculation framework. Formally, we define the unified pre-training objective $\mathcal{L}(\Theta)$ as:
\begin{equation}\label{eq:generalized_loss}
     \mathcal{L}(\Theta)= -\sum_{o\in \mathcal{T}_\text{pre}}\ln\frac{\sum_{a\in \mathcal{P}_o}\exp(\text{sim}(\mathbf{h}_{a}, \mathbf{h}_{o})/\tau)}{\sum_{b\in \mathcal{N}_o}\exp(\text{sim}(\mathbf{h}_{b}, \mathbf{h}_{o})/\tau)},
\end{equation}
where $\mathcal{P}_o$ and $\mathcal{N}_o$ denote the sets of positive and negative samples for a target instance $o$, respectively. $\Theta$ denotes the learnable parameters. The vectors $\mathbf{h}_o$, $\mathbf{h}_a$, and $\mathbf{h}_b$ represent the embeddings of the target, positive, and negative instances, respectively. The hyperparameter $\tau$ controls the temperature scaling of the similarity function $\text{sim}(\cdot,\cdot)$. Following established paradigms \cite{GOUDA,yu2024generalized}, we adopt link prediction as the specific pre-training pretext task within this similarity-based framework.


%% file: sec-method.tex
\section{Chain-of-Thought Graph Learning}
In this section, we present our framework, which is summarized in Fig. \ref{all}. 

\subsection{Motivation}
While LLMs demonstrate superior semantic understanding, they often suffer from a lack of interpretability regarding the mechanisms underlying their effectiveness. To bridge this gap, we look beyond the standard view of LLMs as static encoders. We propose that the self-attention mechanism—the backbone of LLMs—can mathematically implement the optimization dynamics of clustering algorithms. Specifically, we demonstrate that a Transformer block can approximate the Assignment and Update steps of the $k$-means algorithm \cite{PFGC, AMLP, guo2025disentangling, hou2025cae}, which relies solely on weight and bias constructions, without modifying or augmenting the input representations.

\begin{figure}[t]
    \centering
    \includegraphics[width=.45\textwidth]{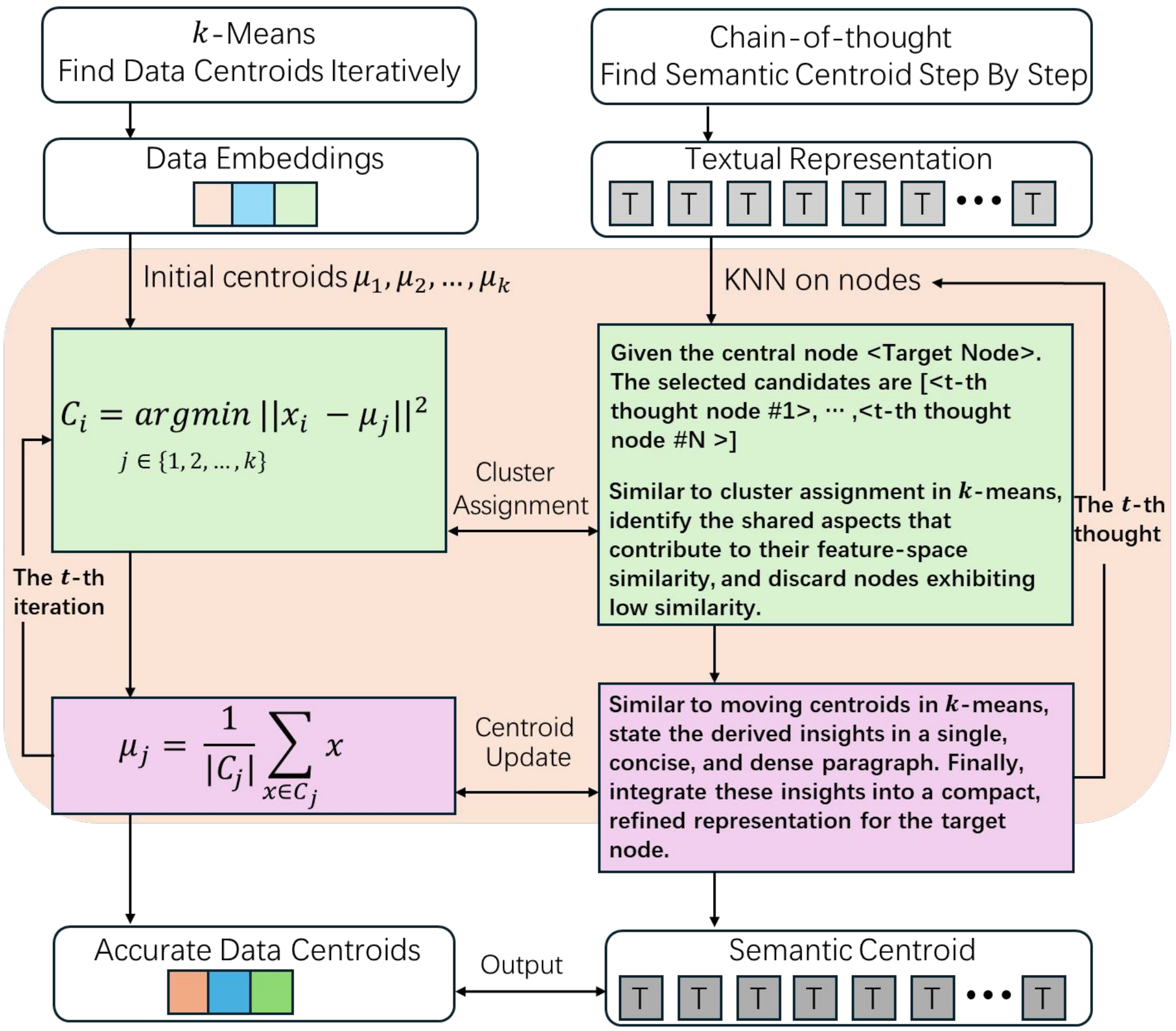}
    \caption{LLM-driven replication of the $k$-means. We achieve fine-grained alignment between the $k$-means algorithm and the proposed prompt.}
    \label{prompt}
\end{figure}

\begin{proposition}[Transformer as $k$-Means Clustering Mechanism]
\label{prop:llm-kmeans}
For any input representations, there exists a parameterization of a Transformer self-attention layer such that its attention weights form an $\epsilon$-approximation of soft $k$-means clustering assignments. Moreover, under a specific construction, this approximation becomes exact with $\epsilon = 0$. As a result, stacking such self-attention layers induces an iterative assignment–update process that mirrors the dynamics of soft clustering methods.
\end{proposition}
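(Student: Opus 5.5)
The proof is an explicit construction: the attention logits will reproduce the negative squared distances up to a term that depends only on the query. Place the data points and the current centers in one token sequence, $X=[X_{\mathrm{data}};X_{\mathrm{ctr}}]$ with rows $x_1,\dots,x_n$ followed by $\mu_1,\dots,\mu_K$. Then choose $W_Q=2I_d$, $W_K=I_d$ and the temperature $\tau$ of the soft $k$-means assignment. Use a key-dependent additive bias $B$. It is $-\infty$ on data-to-data entries, so each data query attends only to the $K$ center keys. On the entry for center key $n+k$ it equals $b_{n+k}=-\|\mu_k\|_2^2$.

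The key step is the algebraic identity
\[
2x_i^\top\mu_k-\|\mu_k\|_2^2=-\|x_i-\mu_k\|_2^2+\|x_i\|_2^2 .
\]
Within row $i$ the extra term $\|x_i\|_2^2/\tau$ is a constant, so Lemma~\ref{lem:softmax-shift} removes it. The restricted row-wise softmax then coincides entrywise with the soft $k$-means assignment~\eqref{eq:soft-kmeans}. This gives $\varepsilon=0$, and a fortiori an $\varepsilon$-approximation for every $\varepsilon\ge 0$.

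If one objects to infinite biases, replace $-\infty$ by a large negative constant $-M$. Each data-key weight is then at most $n e^{-M/\tau}$ times a ratio of bounded sums. The center weights differ from the exact ones by $O(n e^{(C-M)/\tau})$, where $C$ bounds the finite logits. Choosing $M$ large makes this at most any prescribed $\varepsilon$, which covers the $\varepsilon$-approximation clause for finite parameterizations.

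For the update step, set $W_V$ to pass through the center tokens' embeddings, with $W_V=I_d$ and values read on the center rows. The attention output for data token $i$ is then $\sum_k A_{ik}\mu_k$, the soft-assignment-weighted combination of centers. The soft $k$-means M-step instead needs, for each center, the responsibility-weighted mean $\mu_k\leftarrow\sum_i A_{ik}x_i/\sum_i A_{ik}$. I would obtain it with a second attention pattern from center queries to data keys. The obstacle is that this requires normalizing over $i$ rather than over $k$, i.e.\ a column softmax of the same logits. A single row-softmax head cannot produce exactly these weights in general. I would therefore claim only that the center-query head, with logits $(2\mu_k^\top x_i-\|x_i\|^2)/\tau$ via a query-dependent bias, gives the Gaussian-kernel weighting $\propto\exp(-\|x_i-\mu_k\|^2/\tau)$. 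This is the mean-shift style update corresponding to soft $k$-means with a uniform prior over clusters, which agrees with the standard responsibilities when the per-point normalizers are comparable.

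Stacking then alternates: one layer gives the assignment weights, the next gives updated center tokens, and residual connections or identity MLPs carry the tokens forward. This yields the claimed iterative assignment–update dynamics. The exact ($\varepsilon=0$) statement is for the assignment step. The update step is asserted as a structural mirror of soft clustering, not as an exact equality. Making that step exact is the main difficulty, and I would flag it as such.
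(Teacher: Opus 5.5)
Your construction is essentially the paper's proof: the same token layout $[X_{\mathrm{data}};X_{\mathrm{ctr}}]$ with data-to-data entries masked, $W_Q=2I_d$, $W_K=I_d$, center-key bias $-\|\mu_k\|_2^2$, the identity $2x_i^\top\mu_k-\|\mu_k\|_2^2=-\|x_i-\mu_k\|_2^2+\|x_i\|_2^2$, and softmax shift-invariance to get $\varepsilon=0$. The paper stops there and does not treat finite biases or prove the stacking clause, so your $-M$ argument is a correct addition and your update-step sketch is, as you say, only heuristic; two small corrections there: the bias $-\|x_i\|_2^2$ it needs is key-dependent rather than query-dependent, and normalizing over data points gives a Gaussian mean-shift step, not soft $k$-means ``with a uniform prior over clusters'' (standard soft $k$-means already has that prior).
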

The proof can be found in \ref{sec:theory-kmeans-transformer}.

\begin{remark}
[Prompting as a Generalization of Clustering in Reasoning]
Proposition \ref{prop:llm-kmeans} suggests that self-attention naturally implements a clustering-like assignment–update mechanism. Prompt-based multi-step reasoning can be viewed as a generalization of this process, where intermediate thoughts provide contextualized updates and stabilize evolving semantic clusters. This enables iterative refinement of latent groupings through language.
\end{remark}

Moreover, prior work \cite{diaz2025k} has revealed that LLMs possess a superior capability to extract semantic centroids compared to traditional $k$-means algorithms, which is verified using the $dist$ metric (i.e., the Euclidean distance between the learned and ground-truth centroids). Motivated by these insights, we design our CoT prompts to emulate the Assignment and Update steps of $k$-means. By forcing the model to articulate this iterative process, we enable it to leverage its superior semantic centroid finding ability to progressively generate more accurate thoughts for TAGs.
\subsection{Semantic Discriminating Prompt}
While LLMs have demonstrated remarkable capabilities in semantic understanding, they lack an intrinsic mechanism to perform the rigorous, iterative state updates characteristic. Current approaches treat the clustering algorithm and the language model as separate modules, preventing continuous optimization in the semantic space. The primary objective of this section is to resolve this disjointedness by investigating how the Assignment and Update phases of $k$-means can be reformulated as explicit CoT reasoning steps, thereby grounding the optimization process directly within the LLM's generative framework.

\stitle{Prompt Formulation.} The rationale behind this design is to address the semantic misalignment often found in GNNs. In standard $k$-means (left side of Fig. \ref{prompt}), a node $x_i$ is assigned to a cluster $C_j$ by minimizing the Euclidean distance $\|x_i - \mu_j\|^2$. However, in TAGs, ``distance'' is often subjective and context-dependent. Therefore, we replace the rigid mathematical distance with LLM-driven discriminative reasoning.

\stitle{The ``Assignment'' Prompt:} We instruct the LLM to act as a semantic filter. Instead of blindly summarizing all neighbors in previous LLM-based methods, the prompt explicitly asks the model to ``identify shared aspects'' and ``discard nodes exhibiting low similarity''. This mimics the $k$-means assignment step by calculating the ``semantic distance'' between the target node and its candidate neighbors, effectively ensuring that the retained neighboring information is rigorously aligned with the underlying semantic context. 

\stitle{The ``Update'' Prompt}: In $k$-means, the centroid $\mu_j$ is updated by averaging the vectors in the cluster. In our framework, we cannot mathematically average text. Instead, we design the prompt to perform abstractive summarization, asking the model to ``state derived insights in a single, concise, and dense paragraph''. This forces the LLM to compress the semantic variance of the selected neighbors into a ``compact refined representation'', which mathematically serves as the updated ``Semantic Centroid'' for the thought. 

Finally, the output of the text prompt for node $v_i$ is defined as:
\begin{equation}
\mathcal{T}_i \leftarrow \operatorname{Prompt}\left(\mathbf{T}_i, \mathbf{N}_i\right),
\end{equation}
where $\mathbf{N}_i$ indicates the selected candidates of node $v_i$. All prompt designs are summarized in the right side of Fig. \ref{prompt}.

\subsection{Thought Construction}
Next, we introduce the inference framework and prompts designed for the text information in our architecture.

During the $t$-th inference iteration, we input the query graph $\mathcal{G} = (\mathcal{V}, \mathcal{E}, \mathcal{X})$, alongside its thought-based feature matrix, into the pre-trained encoder:

\begin{equation}
    \mathbf{H}^{(t)} = \textsc{GraphEncoder}(\mathbf{X}_t,\mathbf{G};\Theta_0),
\end{equation}

where $\Theta_0$ represents the frozen parameters of the pre-trained graph encoder, while $\mathbf{X}_t$ denotes the node feature matrix derived from the preceding ($t-1$)-th thought. The specific mechanisms for prompt generation and feature adaptation are detailed in the subsequent section. It is important to note that for the initial thought ($t=1$), the raw features are employed directly, such that $\mathbf{X}_1 = \mathbf{X}$.

\stitle{Structure-grounded Thought Construction.} To bridge the gap between the integration of graph structures and LLMs, we propose a structure-grounded thoughts strategy, which integrates both fixed topological structures and evolving reasoning. First, we randomly select $K$ neighbors as $\mathcal{N}_i^{\text{str}}$ based on the local graph topology (both 1- and 2-hop neighbors), ensuring the retention of explicit geometric priors. Simultaneously, to capture the current thought status, we identify the reasoning-induced neighbors $\mathcal{N}_i^{(t)}$ by performing $K$-Nearest Neighbor ($K$NN) search on the node representations $\mathbf{H}^{(t)}$. Based on the Semantic Discriminating Prompt, the obtained textual representations are:
\begin{equation}
\label{eq:Prompt}
\begin{aligned}
&\mathcal{T}^{\text{str}}_i \leftarrow \operatorname{Prompt}\left(\mathbf{T_i}, \{\mathbf{T_j}, v_j\in\mathcal{N}^{\text{str}}_i\}\right) \\
&\mathcal{T}^{(t)}_i \leftarrow \operatorname{Prompt}\left(\mathbf{T_i}, \{\mathbf{T_j}, v_j\in\mathcal{N}^{(t)}_i\}\right).
\end{aligned}
\end{equation}
Given a text encoder $e(\cdot)$, in our case a BERT model \cite{devlin2019bert}, the output reasoning state $z^{(t)}$ is formulated as:
\begin{equation}
\begin{aligned}
    &T^{\text{str}} = e(\mathcal{T}^{\text{str}})\\
    &T^{(t)} = e(\mathcal{T}^{(t)}) \\
    &z^{(t)} = [T^{str}\|T^{(t)}]
\end{aligned}
\end{equation}

\subsection{Thought-conditioned Learning}

We utilize a conditional network (i.e., condition-net) \cite{yu2025node} to synthesize $T^{\text{str}}$ and $T^{(t)}$, balancing them in a unified space.

Specifically, for condition-net on the status $z^{(t)}$, the \textsc{CondNet} generates a reasoning matrix $\mathbf{P}^{(t)} \in \mathbb{R}^{|\mathcal{V}|\times d}$, formulated as:
\begin{align}\label{eq.prompt-generation}
    \mathbf{P}^{(t)} = \textsc{CondNet}(z^{(t)}; \phi),
\end{align}
where $\textsc{CondNet}$ acts as a lightweight adapter (we implement as a Multilayer Perceptron parameterized by $\phi$), functioning as a lightweight hypernetwork \cite{ha2022hypernetworks}. On one hand, it bridges the gap by transforming linguistic semantics into vector embeddings that are compatible with the graph representation space. On the other hand, it balances the trade-off between fixed topological connectivity and evolving thoughts.

This reasoning matrix is then employed to modulate the query graph's node features for the inference in the next step:
\begin{align}\label{eq.thought-prompting}
    \mathbf{X}_{t+1} = \mathbf{P}^{(t)} \odot \mathbf{X},
\end{align}
where $\odot$ denotes the element-wise product. $\mathbf{X}_{t+1}$ serves as the input for the pre-trained graph encoder in the $(t+1)$-th iteration. We call the output of the last reasoning step the answer matrix, since it is directly used for downstream tasks to produce final predictions.

\subsection{Downstream Loss Function}
We employ task-specific objective functions to optimize the model. Let $y$ denote the ground-truth labels and $\hat{y}$ represent the predicted probabilities derived from the final node representations.

\stitle{Node Classification.} For multi-class node classification, we minimize the Cross-Entropy (CE) loss over the training nodes $\mathcal{V}_{train}$:\begin{equation}\mathcal{L}_{\text{NC}} = - \sum\limits_{i \in \mathcal{V}_{train}} \sum\limits_{c=1}^{C} y_{ic} \log(\hat{y}_{ic}),
\end{equation}
where $C$ is the number of classes, and $\hat{y}_{ic}$ is the predicted probability that node $i$ belongs to class $c$.

\stitle{Link Prediction.} For link prediction, we minimize the Binary Cross-Entropy (BCE) loss:
\begin{equation}
\mathcal{L}_{\text{LP}} = - \sum\limits_{(i,j) \in \mathcal{E}_{train}} \left[ y_{ij} \log \hat{y}_{ij} + (1 - y_{ij}) \log (1 - \hat{y}_{ij}) \right],
\end{equation}
where $y_{ij}$ indicates the existence of an edge between node $i$ and $j$, and $\hat{y}_{ij}$ is the predicted probability.

\subsection{Algorithm and Complexity Analysis}

We summarize the KCoT procedure in Algorithm~1. Lines 1--3 initialize the node representations via the graph encoder and construct structural thoughts from the text prompt and graph topology, which are then embedded into continuous representations. Lines 5--13 implement $M$ iterative inference steps: at each step, the model re-encodes node features, retrieves relevant neighbors via KNN, and constructs step-specific thoughts, from which a conditional gating matrix $\mathbf{P}^{(t)}$ modulates the node features for the next iteration.

\begin{algorithm}[H]
\caption{KCoT}
\begin{algorithmic}[1]
\REQUIRE Graph $\mathcal{G} = (\mathcal{V}, \mathcal{E}, \mathbf{X})$, text prompt $\bm{p}$, steps $M$
\ENSURE Predicted answer $\hat{y}$
\STATE $\mathbf{H}^{(0)} \leftarrow \text{GraphEncoder}(\mathbf{X}, \mathcal{G})$
\STATE $\bm{T}^{\text{str}} = \text{ThoughtsConstruction}(\bm{p}, \mathcal{G})$
\STATE $T^{\text{str}} = e(\bm{T}^{\text{str}})$
\STATE \textit{/* KCoT inference steps */}
\FOR{$t = 1$ to $M$}
    \STATE $\mathbf{H}^{(t-1)} \leftarrow \text{GraphEncoder}(\mathbf{X}^{(t-1)}, \mathcal{G})$
    \STATE KNN on $\mathbf{H}^{(t-1)}$
    \STATE $\bm{T}^{t} \leftarrow \text{ThoughtsConstruction}(\bm{p}, \text{KNN})$
    \STATE $T^{(t)} = e(\bm{T}^{\text{str}})$
    \STATE $z^{(t)} = [T^{\text{str}} \| T^{(t)}]$
    \STATE $\mathbf{P}^{(t)} = \text{CondNet}(z^{(t)};\, \phi)$
    \STATE $\mathbf{X}^{(t)} \leftarrow \mathbf{X}^{(t-1)} \odot \mathbf{P}^{(t)}$
\ENDFOR
\STATE $\hat{y} = \text{Downstream}(\mathbf{H}^{(M)})$
\STATE \textbf{Return} $\hat{y}$
\end{algorithmic}
\end{algorithm}

The complexity of \model\ lies in a base GNN with $d_L$ hidden dimensions per layer, identifying reasoning-induced neighbors through KNN, and thought generation with an LLM. Assuming a bounded prompt length and a constant inference cost $C_{\text{LLM}}$ per call (dependent on the specific LLM backbone), the total time complexity for each reasoning iteration $t$ is the summation of these components:$$\mathcal{O}_{\text{total}} = \mathcal{O}(\underbrace{L(|\mathcal{V}| + |\mathcal{E}|)d_L}_{\text{GNN}} + \underbrace{|\mathcal{V}| d_L}_{\text{KNN}} + \underbrace{|\mathcal{V}| C_{\text{LLM}}}_{\text{LLM}})$$

\subsection{Theoretical Analysis: Semantic–Structural Alignment}

\label{sec:theory}

We now provide a theoretical perspective on how \model\ reasoning contributes to the alignment of semantic and structural information in our framework.
Our analysis is based on the following observation: while message passing in GNNs relies on structural neighborhoods, LLM-based semantic reasoning introduces an independent perspective based on representation similarity.
The effectiveness of CoT thus hinges on aligning these two views.


\paragraph{Semantic--Structural Misalignment.}
To quantify the interaction between structural and semantic information, we introduce a label-free misalignment metric.

\begin{definition}[Semantic--Structural Misalignment]\label{defin}
Let $\mathcal{N}_i^{\mathrm{str}}$ and $\mathcal{N}_i^{(t)}$ in Eq. (\ref{eq:Prompt}) denote the structural and semantic neighborhoods of node $i$.
Define the per-node misalignment as
\[
\delta_i^{(t)} := 1 - \frac{|\mathcal{N}_i^{\mathrm{str}} \cap \mathcal{N}_i^{(t)}|}{|\mathcal{N}_i^{\mathrm{str}} \cup \mathcal{N}_i^{(t)}|},
\]
and the global misalignment score as
\[
\Delta_t := \mathbb{E}_{i\sim V}[\delta_i^{(t)}].
\]
\end{definition}

This metric captures how much the semantic similarity is inconsistent with the structural prior. Smaller $\Delta_t$ indicates better semantic--structural alignment.

Under mild assumptions \ref{assumption1} and \ref{assumption2}, we can formally show that CoT reduces misalignment over iterations:

\begin{theorem}[CoT Contracts Semantic--Structural Misalignment]
\label{prop:misalignment}
There exist constants $0 < \rho < 1$ and $\varepsilon \ge 0$ such that
\[
\Delta_{t+1} \le \rho\,\Delta_t + \varepsilon
\quad \text{for all } t \in \mathbb{Z}^+.
\]
In particular, when $\varepsilon$ is small, the misalignment decreases geometrically until it reaches an error floor $\mathcal{O}(\varepsilon)$.
\end{theorem}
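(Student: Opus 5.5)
We prove the recursion by chaining Assumption~\ref{assumption1} and Assumption~\ref{assumption2} through an intermediate quantity: the distance of each representation from the convex hull of its structural neighbours' representations. Define
\[
D_t := \mathbb{E}_i\big[\mathrm{dist}(h_i^{(t)}, \mathrm{Conv}\{h_j^{(t)} : j\in\mathcal{N}_i^{\mathrm{str}}\})\big].
\]
The argument has three steps. First, bound $\Delta_t$ in terms of $D_t$. Second, use Assumption~\ref{assumption2} to contract $D_t$. Third, translate the contraction of $D$ back into one for $\Delta$.

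\textbf{Step 1: a two-sided comparison between $\delta_i^{(t)}$ and the hull distance.}

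For the upper bound, construct for each $i$ a reference representation $\tilde h_i$, namely the nearest point of $\mathrm{Conv}\{h_j^{(t)}: j\in\mathcal{N}_i^{\mathrm{str}}\}$. The reference configuration $\widetilde H$ is chosen so that its KNN neighbourhood coincides with $\mathcal{N}_i^{\mathrm{str}}$; this is the idealised "perfectly aligned" state. Applying Assumption~\ref{assumption1} to $H^{(t)}$ versus $\widetilde H$ controls the symmetric difference $|\mathcal{N}_i^{(t)}\triangle\mathcal{N}_i^{\mathrm{str}}|/K$. Since both sets have size $K$, the Jaccard distance satisfies
\[
\delta_i \le \frac{|A\triangle B|}{|A\cup B|} \le \frac{|A\triangle B|}{K}.
\]
This yields $\Delta_t \le \kappa D_t + \eta$, where $\eta$ absorbs the failure of the reference configuration to be exactly realisable.

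For the lower bound, argue that $D_t \le c\,\Delta_t + \eta'$. When semantic and structural neighbourhoods coincide, $h_i^{(t)}$ lies near its KNN neighbours and hence near their hull. Concretely, I would bound the hull distance by the distance to the nearest structural neighbour. When $i$ has a structural neighbour among its KNN set, that distance is at most the KNN radius $r$. This gives $D_t \le r\,\Delta_t + r\,(\text{fraction of fully disjoint nodes})$. This is the step most likely to need an extra bounded-diameter assumption on $H^{(t)}$, which I would state explicitly. Under that assumption the constant is $c = \mathrm{diam}$.

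\textbf{Step 2: contract $D$ and close the loop.}

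Assumption~\ref{assumption2} bounds the distance of $h_i^{(t+1)}$ to the old hull. The new hull is formed by $h_j^{(t+1)}$, so I must add a drift term $\mathbb{E}_j\|h_j^{(t+1)}-h_j^{(t)}\|$. I would fold this drift into $\epsilon$; it is what the paper's sketch calls the higher-order term. This gives $D_{t+1}\le \lambda D_t + \epsilon'$.

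Chaining the three bounds gives
\[
\Delta_{t+1} \le \kappa D_{t+1} + \eta \le \kappa\lambda\, D_t + \kappa\epsilon' + \eta \le \kappa\lambda c\,\Delta_t + \varepsilon.
\]
So $\rho=\kappa\lambda c$. Unrolling the recursion gives
\[
\Delta_t \le \rho^{t-1}\Delta_1 + \frac{\varepsilon}{1-\rho},
\]
which is the claimed geometric decay to an $\mathcal{O}(\varepsilon)$ floor.

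\textbf{Main obstacle.}

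The main obstacle is ensuring $\rho<1$. Nothing in the stated assumptions forces $\kappa\lambda c<1$, and the two comparisons in Step 1 go in opposite directions, so they lose constants. If the product is not below one, I would use the trivial bound $\Delta_{t+1}\le 1$. Then choose $\rho:=\max(\kappa\lambda c,\tfrac12)$ capped below $1$ and put any excess into $\varepsilon$, at the cost of a larger floor. Alternatively, I would add as a hypothesis that $\kappa c\lambda<1$; this is the honest reading of "$\rho$ depends on $\kappa$ and $\lambda$".
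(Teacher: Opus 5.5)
Your approach is the paper's approach made explicit, and it has a genuine gap. The paper's proof is only a verbal sketch. It uses Assumption~\ref{assumption1} to turn neighbourhood drift into representation drift and Assumption~\ref{assumption2} to contract the distance to the structural hull. It then asserts that ``combining these two observations'' gives $\Delta_{t+1}\le\rho\Delta_t+\varepsilon$, with $\rho$ depending on $\kappa,\lambda$ and $\varepsilon$ collecting ``higher-order and noise terms''.

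Writing the chain out shows that it does not close, and the break comes earlier than the one you flag. It is the upper bound $\Delta_t\le\kappa D_t+\eta$ in Step~1. Projecting each $h_i$ onto $\mathrm{Conv}\{h_j^{(t)}:j\in\mathcal{N}_i^{\mathrm{str}}\}$ does not produce a configuration whose KNN sets are $\mathcal{N}_i^{\mathrm{str}}$. KNN in $\widetilde H$ compares $\tilde h_i$ with the projected $\tilde h_j$, and lying in a hull does not make the hull's generators the nearest points. For example, take $K\ge2$ structural neighbours at distance $1$ from $h_i$ with centroid $h_i$, and $K$ other nodes within distance $0.1$ of $h_i$. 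The hull distance is then $0$, yet $\delta_i=1$.

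So your $\eta=\mathbb{E}_i|\mathcal{N}_i^{\mathrm{knn}}(\widetilde H)\triangle\mathcal{N}_i^{\mathrm{str}}|/K$ is just the misalignment of another configuration. It can be as large as $2$, and then the inequality carries no information. Neither assumption can repair this. Assumption~\ref{assumption1} only compares the KNN sets of two representation matrices and never mentions $\mathcal{N}^{\mathrm{str}}$. Assumption~\ref{assumption2} only controls $D_t$. A hypothesis linking hull proximity to overlap with $\mathcal{N}^{\mathrm{str}}$ is missing, both from your argument and from the paper's.

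Two smaller steps also fail as stated:
\begin{itemize}
\item In Step~2, passing from the old hull to the new one costs $\mathbb{E}_i\max_{j\in\mathcal{N}_i^{\mathrm{str}}}\|h_j^{(t+1)}-h_j^{(t)}\|$. This is first order, not higher order, and nothing bounds it except $\mathrm{diam}$.
\item In your lower bound, a node with $\delta_i=0$ can still sit at hull distance up to its KNN radius $r$. What you actually get is $D_t\le r+\mathrm{diam}\cdot\Delta_t$, with an additive $r$ that is not small.
\end{itemize}

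On $\rho<1$, the situation with $c=\mathrm{diam}$ is worse than ``not forced''. Read Assumption~\ref{assumption1} uniformly over configurations of that diameter. A row permutation of a suitably clustered configuration can make every KNN set disjoint from its old one while moving no point by more than $\mathrm{diam}$. That forces $\kappa\,\mathrm{diam}\ge2$, hence $\rho=\kappa\lambda c\ge2\lambda$.

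Your fallback does establish the literal statement, but only because that statement is vacuous. Since $\Delta_t\in[0,1]$, the choice $\rho=\tfrac12$, $\varepsilon=1$ satisfies it with no assumptions at all. The real content is $\rho<1$ together with a small floor, and neither your chain nor the paper's sketch delivers that.

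A version that does go through needs three changes: (i) add the missing link as a hypothesis, $\Delta_t\le\alpha D_t+\eta$ with $\eta$ small; (ii) state Assumption~\ref{assumption2} relative to the new hull, or assume bounded drift; (iii) use $D_t$ rather than $\Delta_t$ as the Lyapunov quantity. Then $D_t\le\lambda^{t-1}D_1+\epsilon/(1-\lambda)$, hence $\Delta_t\le\alpha\lambda^{t-1}D_1+\alpha\epsilon/(1-\lambda)+\eta$. This is geometric decay at rate $\lambda$ to an $\mathcal{O}(\epsilon+\eta)$ floor. It needs no reverse comparison $D_t\le c\Delta_t$ and no lossy product $\kappa\lambda c$, though it holds in envelope form rather than as the one-step recursion.
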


Although we do not assume access to labels, the misalignment score $\Delta_t$ provides a meaningful proxy for representation quality.
When $\Delta_t$ is high, message passing aggregates semantically inconsistent neighbors, leading to representation blur and class mixing.
As $\Delta_t$ decreases, message passing becomes more semantically coherent, improving representation separability.

\begin{remark}[Role of CoT]
This analysis suggests that CoT is best viewed not as “smarter reasoning,” but as an iterative alignment mechanism between LLM-based semantics and GNN-based structure.
By reducing semantic--structural misalignment, it stabilizes message passing and limits cross-class contamination.
\end{remark}
Combining the clustering interpretation of Transformer-based reasoning
(Proposition~\ref{prop:llm-kmeans}) with the semantic--structural contraction property of
CoT inference (Theorem~\ref{prop:misalignment}), we obtain the following
corollary.

\begin{corollary}
\label{cor:cot_vs_kmeans}
Under the same setting as Theorem~\ref{prop:misalignment} with a frozen graph encoder, our CoT reasoning framework admits a stronger mechanism for aligning semantic and structural information, compared to classic $k$-means clustering.
\end{corollary}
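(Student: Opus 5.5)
The corollary is informal ("a stronger mechanism"), so the first step is to make it precise. I would compare two iteration maps acting on the same frozen encoder $\textsc{GraphEncoder}(\cdot,\mathcal{G};\Theta_0)$. The first is the classic $k$-means map $\mathcal{K}$. By Proposition~\ref{prop:llm-kmeans}, it is realized exactly ($\epsilon=0$) by a self-attention layer with $W_Q=2I_d$, $W_K=I_d$ and key bias $b_{n+k}=-\|\mu_k\|_2^2$; its updates use only representation-space distances. The second is the \model\ map $\mathcal{C}$, defined by Eq.~(\ref{eq:Prompt}), the reasoning state $z^{(t)}=[T^{\mathrm{str}}\|T^{(t)}]$, and the modulation $\mathbf{X}_{t+1}=\mathbf{P}^{(t)}\odot\mathbf{X}$. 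I would interpret ``stronger'' in two parts, both measured by the misalignment $\Delta_t$ of Definition~\ref{defin}: (i) inclusion, meaning $\mathcal{C}$ can reproduce $\mathcal{K}$; and (ii) a strict advantage, meaning $\mathcal{C}$ contracts $\Delta_t$ while $\mathcal{K}$ in general does not.

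For (i), I would show that the \model\ family contains $k$-means as a special case. Suppose \textsc{CondNet} ignores the structural half $T^{\mathrm{str}}$ of $z^{(t)}$, and the Semantic Discriminating Prompt's assignment and update steps are instantiated by the attention construction of Proposition~\ref{prop:llm-kmeans}. Then the resulting modulation $\mathbf{P}^{(t)}$ reproduces the soft $k$-means assignment and centroid update on $\mathbf{H}^{(t)}$. Since \textsc{CondNet} is an MLP hypernetwork, it can represent this choice, at least to arbitrary accuracy by universal approximation. Hence anything $k$-means achieves on $\Delta_t$ is achievable by \model, and the comparison reduces to exhibiting an advantage.

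For (ii), I would apply Theorem~\ref{prop:misalignment} to $\mathcal{C}$. Under Assumptions~\ref{assumption1} and~\ref{assumption2} it gives $\Delta_{t+1}\le\rho\Delta_t+\varepsilon$, so $\limsup_t\Delta_t\le\varepsilon/(1-\rho)$. I would then argue that $\mathcal{K}$ generally fails Assumption~\ref{assumption2}. Its update pulls $h_i$ toward the centroid $\mu_{k}$, and that centroid is defined without reference to $\mathcal{N}_i^{\mathrm{str}}$. So the distance from $h_i$ to $\mathrm{Conv}\{h_j:j\in\mathcal{N}_i^{\mathrm{str}}\}$ need not shrink. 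To make this concrete I would build a counterexample graph: two feature clusters, with edges crossing the clusters so that each node's structural neighbors lie in the opposite cluster. There, $k$-means fixed points give $\Delta_t$ near $1$ for all $t$. The structural term in $z^{(t)}$, by contrast, lets \textsc{CondNet} choose $\mathbf{P}^{(t)}$ satisfying Assumption~\ref{assumption2}, which drives $\Delta_t$ to $O(\varepsilon)$.

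I expect the main obstacle to be step (ii). Theorem~\ref{prop:misalignment} rests on assumptions rather than being derived from the architecture, so the advantage can only be stated conditionally: whenever \textsc{CondNet} realizes a $\lambda$-contractive structure-anchored update, \model\ satisfies the contraction, while $k$-means does not in the constructed instance. I would therefore state the final result as a containment plus a separation on that instance, rather than a uniform inequality over all graphs.
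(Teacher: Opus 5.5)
\textbf{Step (i) is the gap.} Your step (ii) is, in qualitative form, the whole of the paper's argument. Classic $k$-means never references $\mathcal{N}_i^{\mathrm{str}}$ and so does not act on $\Delta_t$, whereas Theorem~\ref{prop:misalignment} gives $\Delta_{t+1}\le\rho\Delta_t+\varepsilon$ for the CoT iteration. Step (i), the containment of $k$-means in \model, is not attempted in the paper, and your justification does not establish it.

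\begin{itemize}
\item Universal approximation of \textsc{CondNet} only gives freedom in the map $z^{(t)}\mapsto\mathbf{P}^{(t)}$, and even then only where the inputs $z_i^{(t)}$ are distinct.
\item \model\ can move representations only through $\mathbf{H}^{(t+1)}=\textsc{GraphEncoder}(\mathbf{P}^{(t)}\odot\mathbf{X},\mathcal{G};\Theta_0)$. This always rescales the \emph{original} $\mathbf{X}$ and then applies frozen message passing and nonlinearities.
\item You would have to prove that the $k$-means-updated representations lie in (the closure of) the image of $\mathbf{P}\mapsto\textsc{GraphEncoder}(\mathbf{P}\odot\mathbf{X},\mathcal{G};\Theta_0)$. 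Nothing in the setup guarantees this.
\item Proposition~\ref{prop:llm-kmeans} only asserts that one specially parameterized attention layer exists ($W_Q=2I_d$, $W_K=I_d$, $b_{n+k}=-\|\mu_k\|_2^2$). It says nothing about what the frozen Vicuna backbone followed by the BERT encoder computes. So ``instantiating the prompt's assignment and update steps by that construction'' is not a step you can take.
\end{itemize}
Containment therefore has to be assumed outright or dropped. The paper's notion of ``stronger'' needs only that CoT has a mechanism acting on $\Delta_t$ which $k$-means lacks.

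\textbf{Step (ii) needs a quantitative separation and a repaired instance.} The contraction with unspecified constants is vacuous. Since $\Delta_t\in[0,1]$, every sequence, including one produced by $k$-means, satisfies $\Delta_{t+1}\le\rho\Delta_t+1$ for any $\rho\in(0,1)$. Similarly, any bounded dynamics satisfies Assumption~\ref{assumption2} once $\epsilon$ is at least the diameter of the representations. So ``$\mathcal{K}$ fails it'' must be shown for a specific small $\epsilon$. A real separation needs $\varepsilon/(1-\rho)<\liminf_t\Delta^{\mathcal{K}}_t$ on your instance. That requires \model\ to realize Assumption~\ref{assumption2} there with small $\epsilon$, which is the same reachability question as above. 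It also requires the theorem's $\varepsilon$ to be controlled by that $\epsilon$, which the paper leaves implicit (``aggregates higher-order and noise terms''). Your conditional statement should carry this requirement explicitly; the paper shares the weakness but claims only the qualitative contrast.

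The instance itself has three problems:
\begin{itemize}
\item $\mathcal{N}_i^{\mathrm{str}}$ is sampled from 1- \emph{and} 2-hop neighbors. If all edges cross the two clusters, the 2-hop neighbors lie in the node's own cluster, so ``structural neighbors in the opposite cluster'' fails as stated.
\item KNN is computed on the GNN output $\mathbf{H}$, not on $\mathbf{X}$, so the two-cluster geometry must be checked after message passing.
\item The pull-toward-centroid dynamic is unnecessary. In Lloyd's $k$-means only the centers move, so with a frozen encoder $\mathcal{N}_i^{(t)}$, and hence $\Delta_t$, stay constant. This is the cleanest form of the paper's claim, and it gives your separation as soon as $\Delta_0$ exceeds the CoT error floor.
\end{itemize}
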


Classic $k$-means assignment--update dynamics operate solely in the representation space and improve semantic compactness, but they do not explicitly act on the
semantic--structural misalignment $\Delta_t$, which depends on the structural
neighborhood $\mathcal{N}_i^{\mathrm{str}}$.
In contrast, Theorem~\ref{prop:misalignment} establishes that CoT iterations satisfy
\(
\Delta_{t+1} \le \rho\,\Delta_t + \varepsilon,
\)
for some $0<\rho<1$.
This contraction property implies that CoT explicitly reduces
semantic--structural misalignment across iterations, a mechanism absent from $k$-means clustering.
Consequently, CoT offers a more effective alignment between semantic and
structural information in downstream graph learning.

%% file: sec-experiments.tex
\begin{table}[h]
    \centering
    \caption{Dataset Statistics}
    \label{tab:datasets}
    \resizebox{1.\linewidth}{!}{
        \begin{tabular}{lcccc}
            \toprule
            Dataset & Domain & \#Node & \#Edge & \#Sparsity(\textpertenthousand) \\
            \midrule
            Cora & citation & 2708 & 5429 & 14.8065 \\
            Pubmed & citation & 19717 & 44338 & 2.2810 \\
            Arxiv & citation & 169343 & 1166243 & 0.8134 \\
            Products & e-commerce & 2449029 & 61859140 & 0.2063 \\
            \bottomrule
        \end{tabular}
    }
\end{table}

\section{Experiments}


\begin{table*}[t]
\centering
\caption{Performance comparison with baseline models under three evaluation protocols. \textbf{Single Focus} represents models trained on an individual task-dataset pair. \textbf{Task Expert} refers to models specialized in one specific task by training across all datasets. \textbf{Classification Expert} denotes models jointly trained on node classification and link prediction across all datasets to achieve cross-domain proficiency in classification. The \textbf{best} results are highlighted in bold, and the \underline{second best} results are underlined. We report the statistical significance over the strongest baseline using the $t$-test.}
\resizebox{0.9\textwidth}{!}{
\begin{tabular}{c|c|cccc|cccc}
\toprule
\multirow{2}{*}{\textsc{Model Type}} & \multirow{2}{*}{\textsc{Model}} & \multicolumn{4}{c|}{\textsc{Node Classification Accuracy (\%)}} & \multicolumn{4}{c}{\textsc{Link Prediction Accuracy (\%)}} \\
\cline{3-10}
 &  & \textsc{Arxiv} & \textsc{Products} & \textsc{Pubmed} & \textsc{Cora} & \textsc{Arxiv} & \textsc{Products} & \textsc{Pubmed} & \textsc{Cora} \\
\midrule
\multirow{11}{*}{\begin{tabular}[c]{@{}c@{}}\textsc{Single}\\ \textsc{Focus}\end{tabular}} 
 & GCN & 73.72 & 80.75 & 92.96 & 88.93 & 91.43 & 93.95 & 90.91 & 81.59 \\
 & GraphSAGE & 76.29 & 82.87 & 94.87 & 88.89 & 91.64 & 94.96 & 90.64 & 79.15 \\
 & GAT & 74.06 & 83.06 & 92.33 & 88.97 & 85.99 & 93.85 & 83.96 & 80.06 \\
 & SGC & 71.77 & 75.47 & 87.35 & 87.97 & 87.99 & 88.51 & 83.60 & 80.94 \\
 & SAGN & 75.70 & 82.58 & \underline{95.17} & 89.19 & 90.62 & 94.85 & 90.48 & 79.88 \\
 & NodeFormer & 74.85 & 83.72 & 94.90 & 88.23 & 91.84 & 90.93 & 77.69 & 77.26 \\
 \cmidrule(lr){2-10}
 & Vicuna-7B & 49.62 & 55.63 & 60.15 & 55.43 & 80.23 & 86.77 & 73.25 & 78.45 \\
 & GraphGPT & 75.11 & 84.15 & 94.23 & 88.45 & 91.28 & 94.32 & 82.50 & 80.19 \\
 & LLAGA-ND &75.98 &84.60 &95.03 &88.86 &91.24 &\textbf{97.36} &\underline{91.41} &83.79 \\
 & LLAGA-HO & \underline{76.66} & \underline{84.67} & 95.03 & \underline{89.22} & \underline{94.15} &95.56 & 89.18 & \underline{86.82} \\
 & \model & \textbf{79.25} & \textbf{86.39} & \textbf{95.87} & \textbf{90.63} & \textbf{95.14} & \underline{96.70} & \textbf{93.08} & \textbf{88.45} \\
 \midrule
 & $p$-value & 7.96e-8 & 1.52e-6 & 6.27e-5 & 7.34e-6 & 8.25e-5 & 8.33e-3 & 2.21e-5 & 2.66e-5 \\
\midrule
\multirow{9}{*}{\begin{tabular}[c]{@{}c@{}}\textsc{Task}\\ \textsc{Expert}\end{tabular}} 
 & GCN & 71.45 & 80.88 & 89.25 & 81.62 & 88.51 & 93.54 & 81.01 & 78.88 \\
 & GraphSAGE & 72.56 & 82.50 & 94.15 & 81.99 & 87.76 & 93.49 & 76.14 & 80.74 \\
 & GAT & 72.19 & 82.61 & 87.97 & 83.58 & 82.58 & 92.03 & 76.85 & 79.76 \\
 & NodeFormer & 72.35 & 82.99 & 94.41 & 83.27 & 84.11 & 93.42 & 80.40 & 81.03 \\
 \cmidrule(lr){2-10}
 & Vicuna-7B & 48.72 & 65.25 & 67.87 & 54.66 & 80.36 & 85.64 & 79.43 & 80.28 \\
 & GraphGPT & 73.50 & 84.32 & 94.12 & 88.95 & 90.82 & 93.12 & 83.40 & 81.27 \\
 & LLAGA-ND &\underline{76.41} &\underline{84.60} &94.78 &88.19 &91.20 &\textbf{97.38} &\underline{93.27} &\underline{89.41} \\
 & LLAGA-HO &76.40 &84.18 &\underline{95.06} &\underline{89.85} &\underline{94.36} &95.85 &88.88 &87.50 \\
 & \model & \textbf{79.26} & \textbf{86.80} & \textbf{96.79} & \textbf{91.64} & \textbf{95.72} & \underline{96.32} & \textbf{95.97} & \textbf{91.34} \\
 \midrule
 & $p$-value & 3.72e-8 & 2.58e-6 & 1.34e-5 & 1.60e-5 & 9.52e-5 & 5.29e-4 & 5.46e-7 & 7.32e-6 \\
\midrule
\multirow{9}{*}{\begin{tabular}[c]{@{}c@{}}\textsc{Classification}\\ \textsc{Expert}\end{tabular}} 
 & GCN & 70.95 & 80.02 & 89.00 & 82.77 & 87.69 & 92.88 & 72.28 & 78.35 \\
 & GraphSAGE & 71.91 & 81.62 & 91.81 & 82.44 & 89.23 & 92.22 & 75.36 & 82.09 \\
 & GAT & 70.90 & 81.83 & 87.72 & 82.07 & 85.18 & 92.11 & 75.00 & 80.35 \\
 & NodeFormer & 63.20 & 75.55 & 89.50 & 69.19 & 82.33 & 75.42 & 78.22 & 81.47 \\
 \cmidrule(lr){2-10}
 & Vicuna-7B & 55.26 & 67.69 & 70.91 & 68.62 & 81.34 & 86.58 & 78.09 & 81.04 \\
 & GraphGPT & 68.74 & 82.13 & 93.88 & 87.60 & 91.58 & 94.26 & 82.46 & 83.50 \\
 & LLAGA-ND &75.85 &\underline{83.58} &95.06 &87.64 &90.81 &\underline{96.56} &\underline{92.36} &87.35 \\
 & LLAGA-HO & \underline{75.99} & 83.32 & \underline{94.80} & \underline{89.30} & \underline{94.30} & 96.06 & 88.64 & \underline{88.53} \\
 & \model & \textbf{78.76} & \textbf{86.25} & \textbf{96.27} & \textbf{91.34} & \textbf{95.45} & \textbf{97.30} & \textbf{93.40} & \textbf{90.34} \\
 \midrule
 & $p$-value & 4.83e-8 & 6.49e-7 & 5.50e-5 & 4.92e-6 & 3.02e-4 & 4.54e-3 & 5.37e-5 & 1.68e-6 \\
\bottomrule
\end{tabular}
}
\label{table1}
\end{table*}

\begin{table}[h]
\centering
\caption{Ablation study on the effects of key components (Accu-
racy\%).}
\resizebox{1.\linewidth}{!}{
\begin{tabular}{l|cc|cc}
\toprule
\multirow{2}{*}{\textsc{Variant}} & \multicolumn{2}{c|}{\textsc{Node Classification}} & \multicolumn{2}{c}{\textsc{Link Prediction}} \\
\cline{2-5}
 & \textsc{Cora} & \textsc{Products} & \textsc{Cora} & \textsc{Products} \\
\midrule
\model\ w/o $\mathcal{N}^{\text{str}}$ & 89.84 & 85.12 & 87.68 & 96.03 \\
\model\ w/o  $\mathcal{N}^{(t)}$ & 89.02 & 84.17 & 85.32 & 94.47 \\
\model\ w/o Prompt & 87.97 & 82.35 & 83.47 & 92.05 \\
\model\ w/o CoT & 89.12 & 82.47 & 82.65 & 94.21 \\
\midrule
\model & \textbf{90.63} & \textbf{86.39} & \textbf{88.45} & \textbf{96.70} \\
\bottomrule
\end{tabular}
}
\label{tab:ablation}
\end{table}

In this section, we conduct experiments to evaluate \model, and analyze the empirical results.

\subsection{Experimental Setup}
\textbf{Datasets.} To verify the effectiveness of our proposed model, we conduct experiments on several widely used text-attributed graph datasets including Pubmed, Cora \cite{yang2016revisiting}, ogbn-Arxiv, and ogbn-Products \cite{hu2020open}. Spanning citation networks and e-commerce domains, these datasets exhibit distinct structural properties. Ranging from small-scale to massive, they provide a rigorous testing ground for models in both sparse and dense scenarios. Dataset statistics are summarized in Table \ref{tab:datasets}.

\textbf{Baselines.} We compare our model against various state-of-the-art methods. The first category includes GNNs: GCN \cite{kipf2017semisupervised}, GraphSage \cite{hamilton2017inductive}, GAT \cite{velickovic2017graph}, SGC \cite{wu2019simplifying} , and SAGN \cite{sun2025scalable}. The second category consists of the graph transformer NodeFormer \cite{wu2022nodeformer}. Finally, we include the open-source LLM Vicuna-7B as a baseline for text-attributed graph understanding, alongside recently proposed LLM-based approaches such as GraphGPT \cite{tang2024graphgpt} and LLAGA \cite{pmlr-v235-chen24bh}.

\textbf{Implementation Details.}\footnote{Code is available at https://github.com/Uncnbb/KCoT.} To ensure fair comparison, we adopt the experimental settings of LLAGA \cite{pmlr-v235-chen24bh}. Specifically, we adhere to the standard partition ratios: 6:2:3 for Arxiv, 8:2:90 for Products, and 6:2:2 for both Cora and Pubmed. For link prediction, we employ their sampling strategy by deriving node pairs from the corresponding node-level subsets, ensuring the edge-level training size matches that of the node-level task. Our framework integrates Vicuna-7B (v1.5, 16K) to serve as the underlying backbone. A 2-layer GCN with a hidden dimension of 128 serves as the graph encoder for all datasets. Hidden dimension $d$ in the condition-net is selected from \{16, 32, 64, 128\}. The number of thought steps $t$ and neighbor count $K$ are fixed at 2 and 5, respectively. We set pre-training epochs to 1000 and downstream training epochs to 400, updating thoughts every 100 epochs for efficiency. We implemented an early-stopping mechanism during the optimization phase. Training was terminated prematurely if the validation performance failed to improve for several successive epochs. We evaluate performance on node classification and link prediction, using Accuracy as the primary metric. Experiments were conducted on a system with 4 NVIDIA A800 (80GB) GPUs, 14 Intel Xeon Gold 6348 CPU cores, and 100GB of RAM.

\begin{figure}[H]
    \centering
    \begin{subfigure}[b]{0.47\linewidth}
        \centering
        \includegraphics[width=\linewidth]{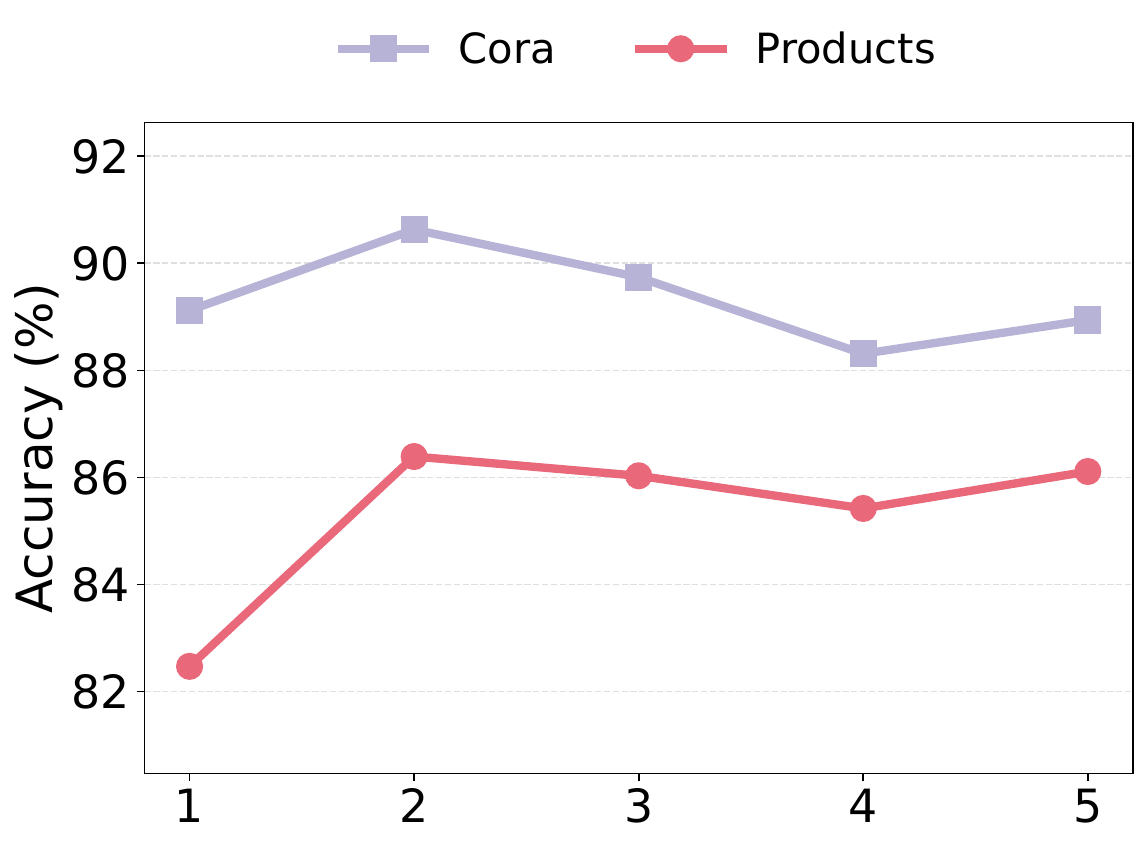}
        \caption{Node Classification}
        \label{fig:t_node}
    \end{subfigure}
    \hfill
    \begin{subfigure}[b]{0.47\linewidth}
        \centering
        \includegraphics[width=\linewidth]{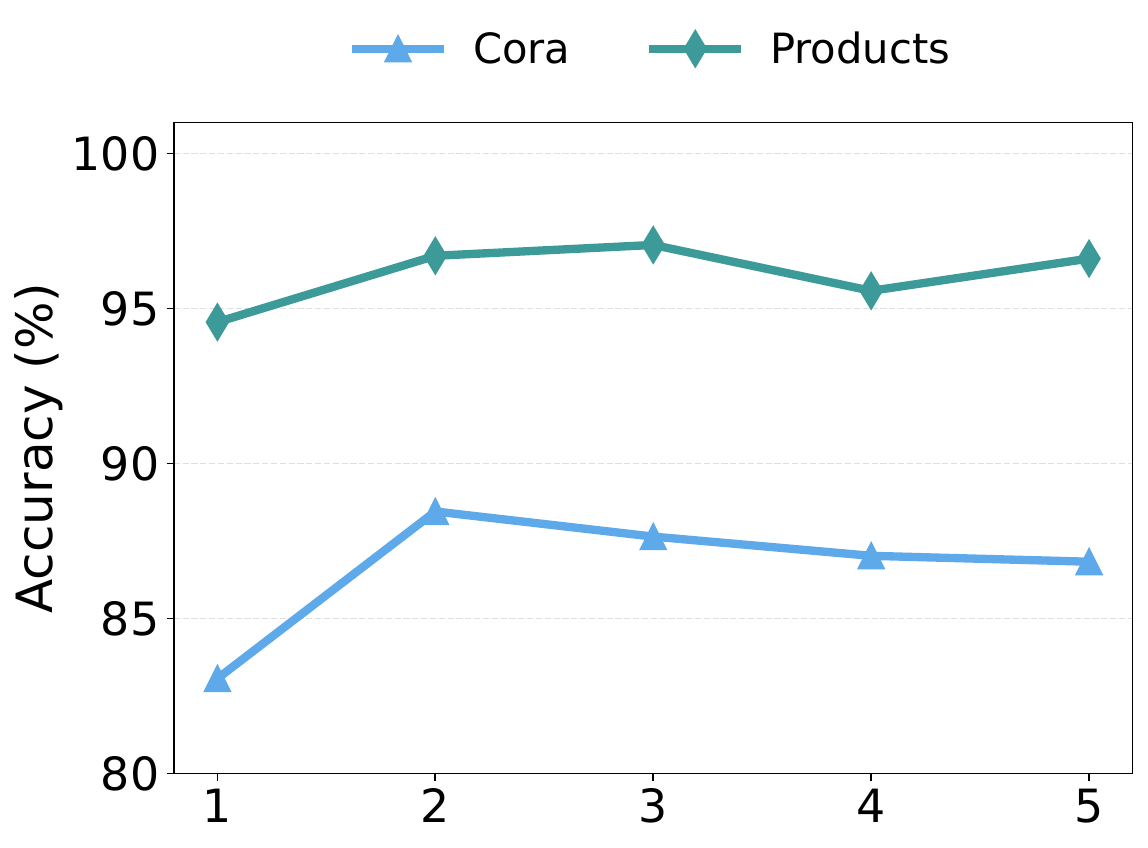}
        \caption{Link Prediction}
        \label{fig:t_link}
    \end{subfigure}
    
    \caption{Impact of Thought Length $t$.}
    \label{fig:impact_t}
\end{figure}

\begin{figure}[H]
    \centering
    \begin{subfigure}[b]{0.47\linewidth}
        \centering
        \includegraphics[width=\linewidth]{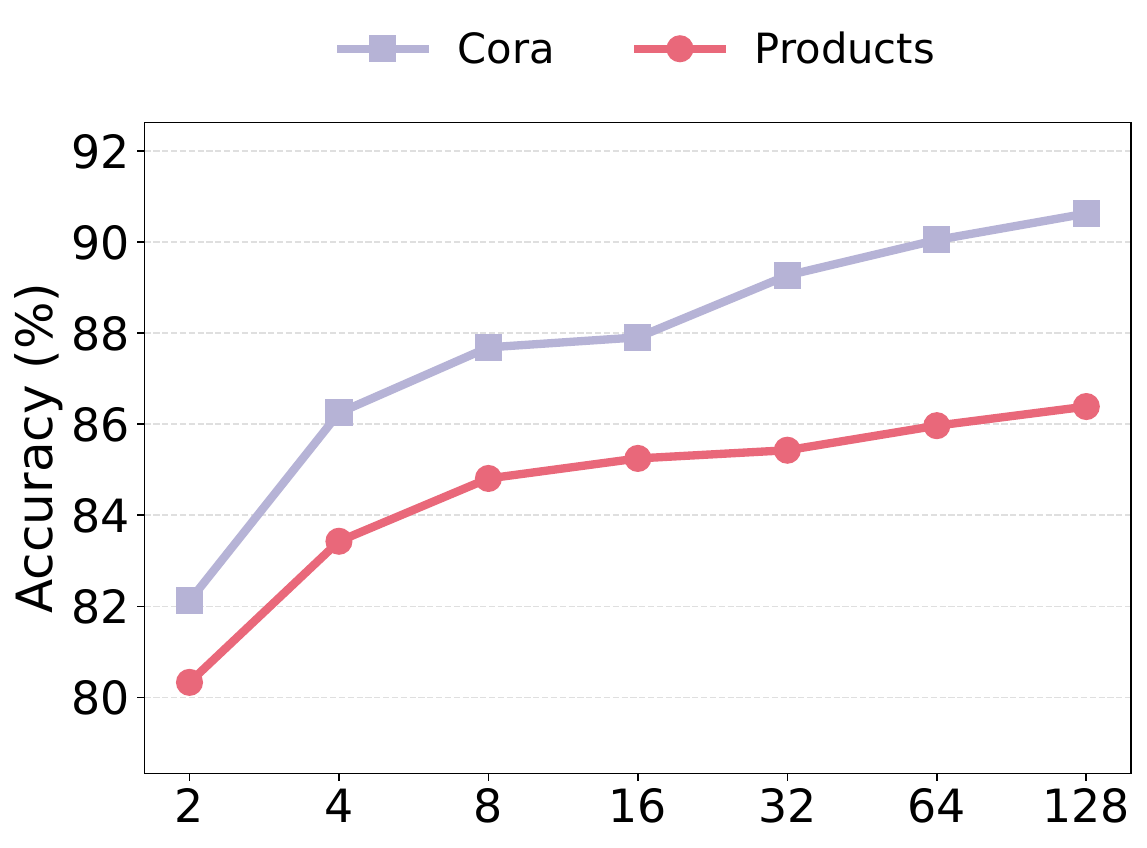}
        \caption{Node Classification}
        \label{fig:d_node}
    \end{subfigure}
        \hfill
    \begin{subfigure}[b]{0.47\linewidth}
        \centering
        \includegraphics[width=\linewidth]{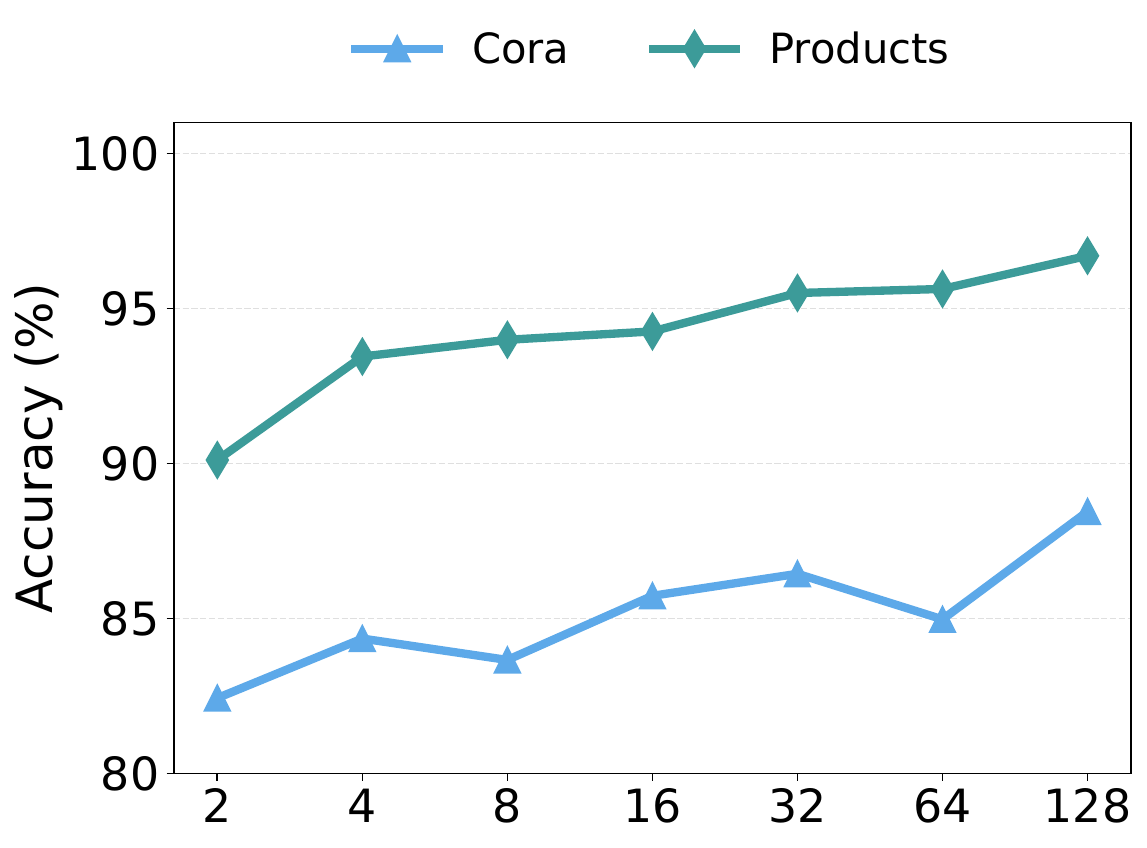}
        \caption{Link Prediction}
        \label{fig:d_link}
    \end{subfigure}
    
    \caption{Impact of hidden dimension $d$ in the condition-net.}
    \label{fig:impact_d}
\end{figure}

\subsection{Overall Performance Comparison}

Table \ref{table1} compares our proposed framework and state-of-the-art baselines across three evaluation protocols. \model\ achieves the best performance in most cases, demonstrating a clear advantage, for example, reaching 79.25\% accuracy on \textsc{ogbn-Arxiv}. The core superiority of our model lies in its multi-step CoT reasoning, which provides a decisive advantage over existing baselines. Unlike general-purpose LLMs such as Vicuna-7B, which rely exclusively on textual semantics and lack structural awareness, our approach explicitly leverages graph structures. Compared to GraphGPT which also incorporates CoT, our model features a superior interpretable design, providing more effective step-by-step reasoning paths. In contrast to LLAGA's template-based structural serialization, our method achieves a progressive alignment between textual semantics and structural information. This alignment approximates $k$-means clustering, where each CoT step functions as a latent centroid update to refine decision boundaries and filter structural noise. This yields geometric interpretability significantly superior to the post-hoc explanations of prior models. Furthermore, the poor performance of traditional methods like GCN and GraphSAGE highlights the significance of leveraging LLM capabilities.



\begin{table}[h]
\centering
\caption{Integration with Various LLMs (Accuracy\%)}
\resizebox{0.9\linewidth}{!}{
\begin{tabular}{l|cc|cc}
\toprule
\multirow{2}{*}{\textsc{Model}} & \multicolumn{2}{c|}{\textsc{Node Classification}} & \multicolumn{2}{c}{\textsc{Link Prediction}} \\
\cline{2-5}
 & \textsc{Cora} & \textsc{Products} & \textsc{Cora} & \textsc{Products} \\
\midrule
Vicuna-7B & 90.63 & 86.39 & 88.45 & 96.70 \\
Llama2-7B & 89.92 & 87.17 & 88.82 & 96.93 \\
ChatGPT 4-1 nano & 91.04 & 87.85 & 89.22 & 97.81 \\
\bottomrule
\end{tabular}
}
\label{tab:llms}
\end{table}

\subsection{Ablation Study}

To systematically validate the contribution of each component in \model, we conduct an ablation study under the \textsc{Single Focus} protocol. Table \ref{tab:ablation} presents a comparative analysis between the full model and four distinct variants. (1) \textbf{\model\ w/o $\mathcal{N}^{\text{str}}$} discards inputting graph topology into LLM, exhibits a performance decline. This indicates that the structure-grounded design enhances the interaction between the LLM and graph structures. (2) Removing the KNN-based neighbors (\textbf{\model\ w/o $\mathcal{N}^{(t)}$}) leads to a further drop in accuracy. This confirms that relying solely on fixed graph edges is insufficient, especially for nodes with sparse or noisy connections. (3) Most notably, the removal of our specific prompt mechanism (\textbf{\model\ w/o Prompt}, i.e., inputting only original text), results in the worst performance in most cases. This verifies that the LLM requires explicit algorithmic guidance, rather than acting merely as a direct text encoder. (4) The variant \textbf{\model\ w/o CoT} restricts the inference process to a single pass (i.e., $t=1$), disabling the iterative refinement mechanism. This also leads to a consistent degradation across all tasks, showing that CoT offers a more effective coordination between semantic and graph structure. This conclusion is consistent with previous theoretical analysis.

\subsection{Sensitivity Analysis}

We examine the sensitivity of two key hyperparameters: the length of thought ($t$) and the hidden dimension of the condition-net ($d$). Figure \ref{fig:impact_t} illustrates model performance across varying thought lengths. We observe a consistent pattern where performance generally peaks at $t=2$ across different tasks. Beyond this point ($t > 2$), performance tends to decline. On one hand, this behavior mirrors $k$-means clustering, where excessive iterations can overfit noise by pulling centroids toward outliers. On the other hand, since our reasoning mechanism is GNN-based, stacking too many layers may lead to over-smoothing, rendering KNN neighbor selection inaccurate. Additionally, Figure \ref{fig:impact_d} shows the impact of varying the condition-net hidden dimension $d$. Increasing $d$ consistently improves performance, indicating that higher-dimensional projections enhance expressivity and facilitate better representation learning.


\subsection{Integration with Various LLMs}

Our framework demonstrates remarkable flexibility across various base LLMs. As shown in Table \ref{tab:llms}, we evaluate performance across Vicuna-7B, Llama2-7B, and the more advanced ChatGPT-4.1 nano. It is evident that our method consistently yields favorable results irrespective of the base LLM, verifying its robustness across different architectures. Furthermore, the integration with ChatGPT-4.1 nano achieves the highest accuracy, indicating that our approach effectively leverages the superior reasoning capabilities of stronger backbones to further enhance the performance.

\subsection{Visualization}

To intuitively evaluate representation quality and support our theoretical analysis, we employ t-SNE to visualize node embeddings and track the inter/intra-class ratio on the \textsc{Cora} dataset. Initially, raw features (Fig. \ref{fig:raw}) exhibit a disordered distribution with significant class overlap. By Epoch 200 (Fig. \ref{fig:epoch200}) and Epoch 400 (Fig. \ref{fig:epoch400}), distinct clusters emerge; samples gravitate towards class centers, and decision boundaries sharpen. Furthermore, Fig. \ref{fig:Clustering} illustrates a rising trend in the average inter/intra-class ratio, indicating continuous improvement in cluster discriminability. This aligns with our theoretical framework, which interprets CoT reasoning as an iterative assignment and update procedure analogous to $k$-means. These results also empirically verify our definitions and assumptions for our main Theorem. The Traditional Prompt (Fig. \ref{fig:trad_prompt}), which relies on a learnable matrix without LLM-based text information, fails to achieve distinct separation, highlighting the necessity of semantic knowledge. Finally, regarding the effectiveness of CoT, we visualize the single-step ablation ($t=1$) in Fig. \ref{fig:reasoning_t1}. Compared to $t=2$ (Fig. \ref{fig:epoch400}), the clusters at $t=1$ are formed but less distinct, suggesting that a single inference step struggles with complex boundaries. This confirms that our multi-step CoT mirrors iterative $k$-means to refine latent centroids and correct misclassifications, achieving superior class separation.

\begin{figure}[h]
  \centering
  
  \begin{subfigure}[b]{0.45\linewidth}
    \centering
    \includegraphics[width=\linewidth]{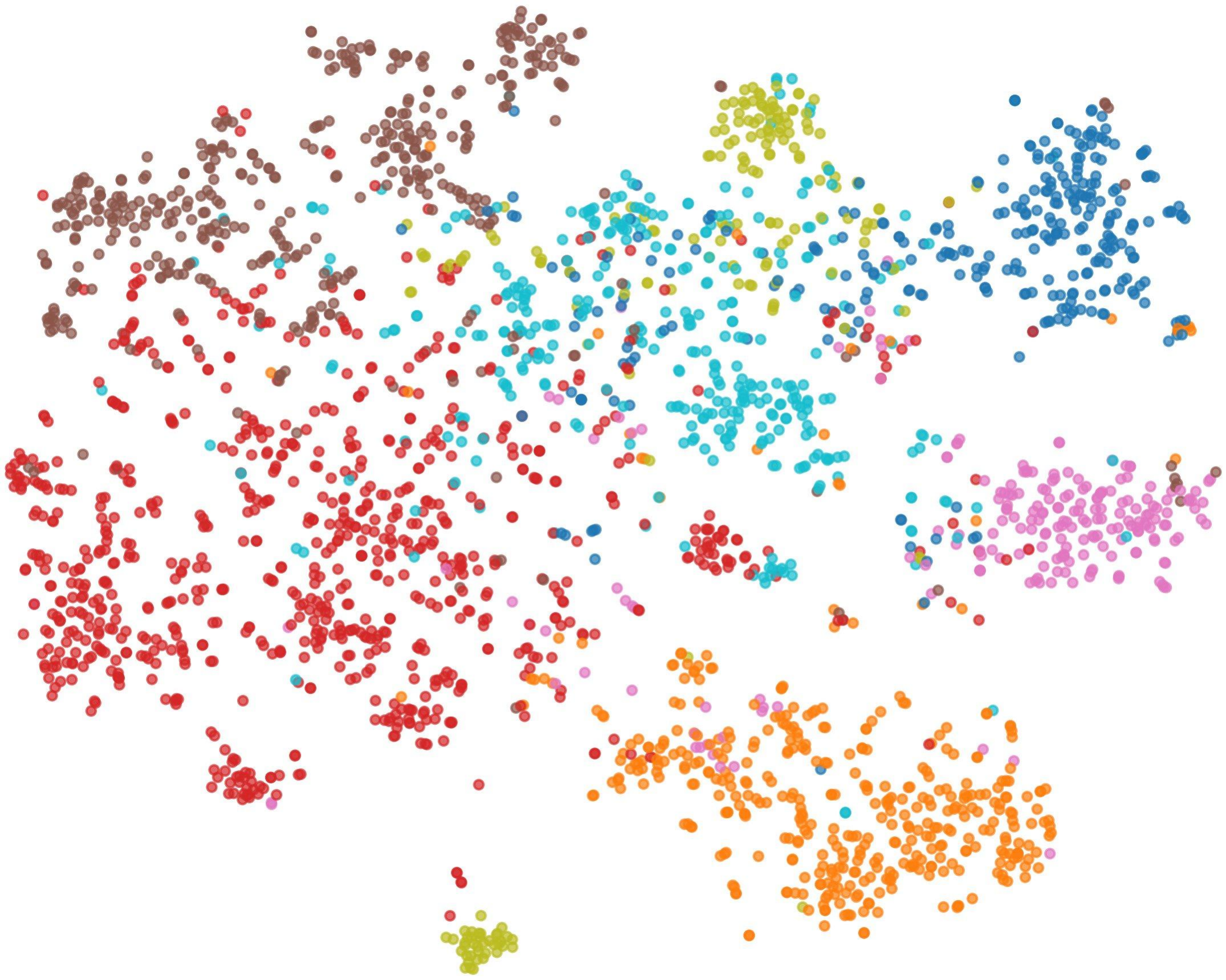} 
    \caption{Raw Features}
    \label{fig:raw}
  \end{subfigure}
  \hfill
  \begin{subfigure}[b]{0.45\linewidth}
    \centering
    \includegraphics[width=\linewidth]{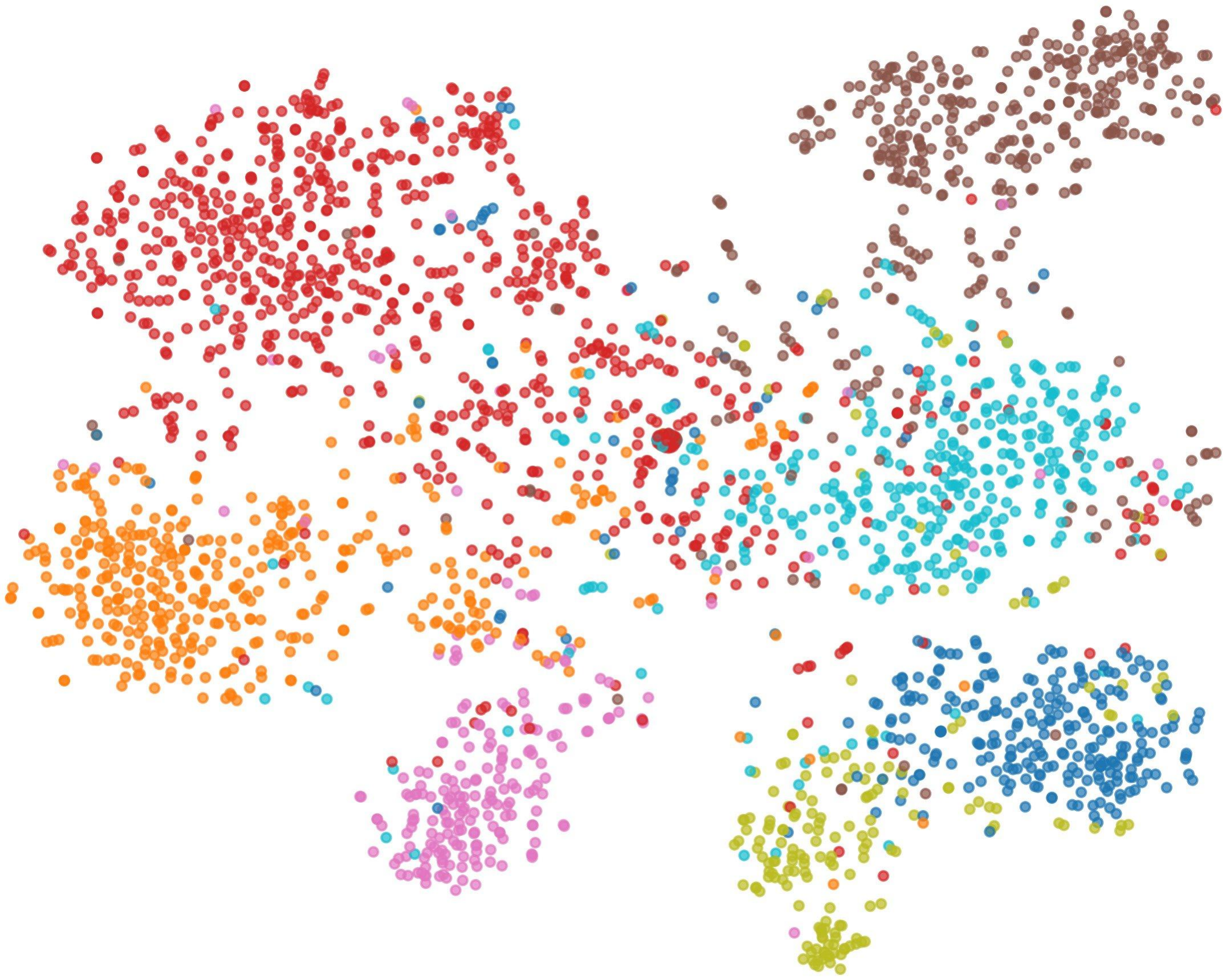}
    \caption{Traditional Prompt}
    \label{fig:trad_prompt}
  \end{subfigure}

  \begin{subfigure}[b]{0.45\linewidth}
    \centering
    \includegraphics[width=\linewidth]{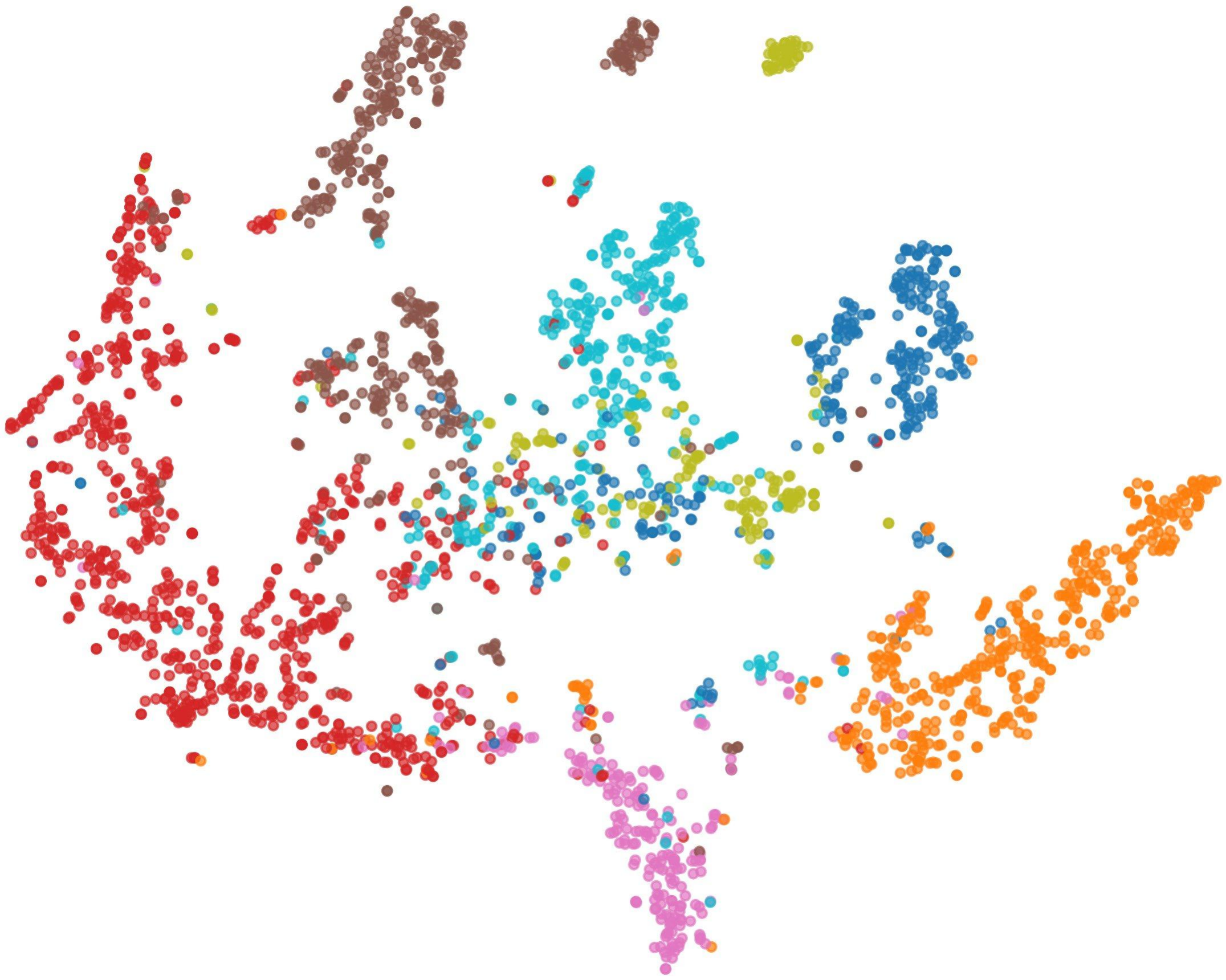}
    \caption{$t=2$ Epoch 200}
    \label{fig:epoch200}
  \end{subfigure}
  \hfill
  \begin{subfigure}[b]{0.45\linewidth}
    \centering
    \includegraphics[width=\linewidth]{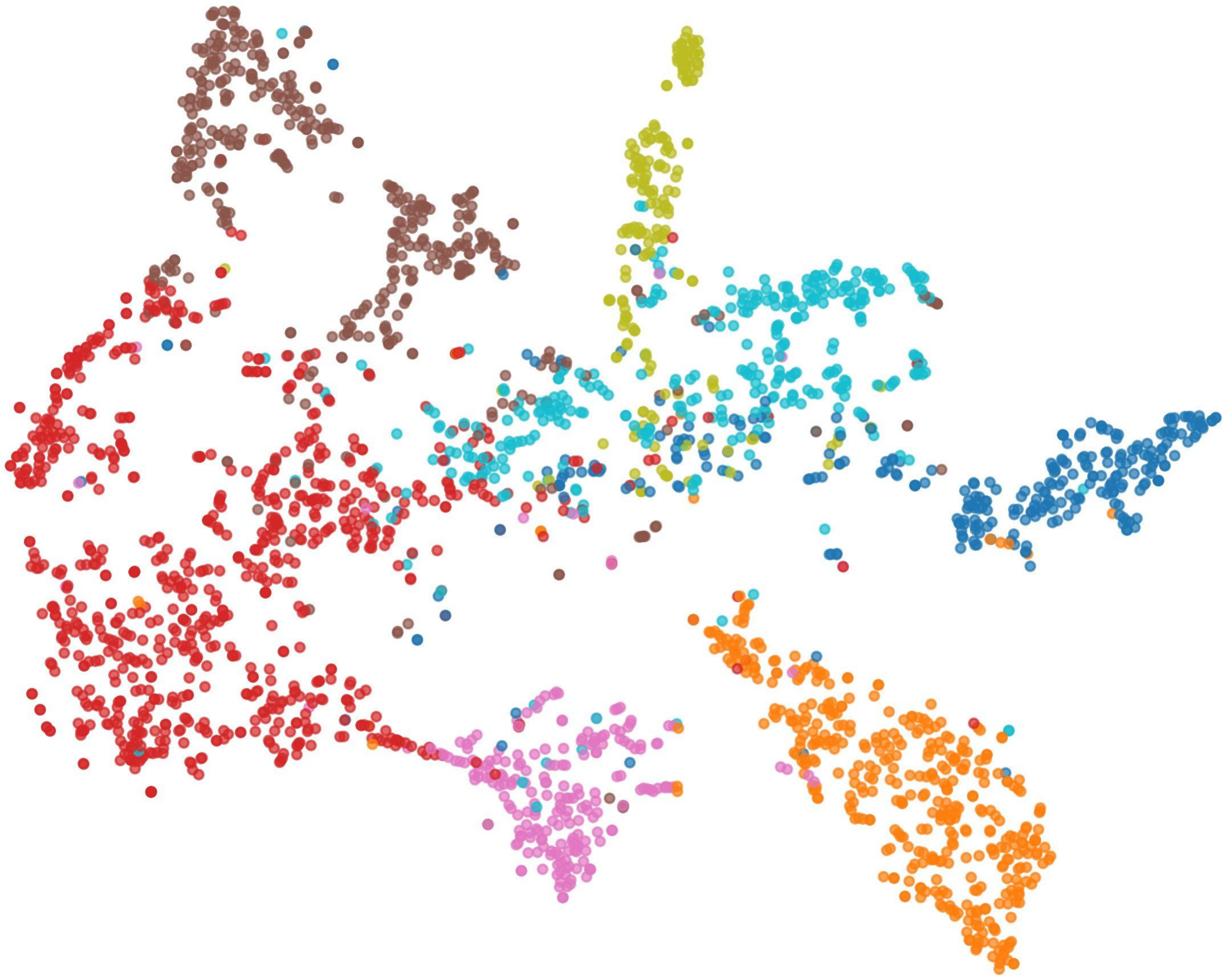}
    \caption{$t=1$ Epoch 400}
    \label{fig:reasoning_t1}
  \end{subfigure}

  \begin{subfigure}[b]{0.45\linewidth}
    \centering
    \includegraphics[width=\linewidth]{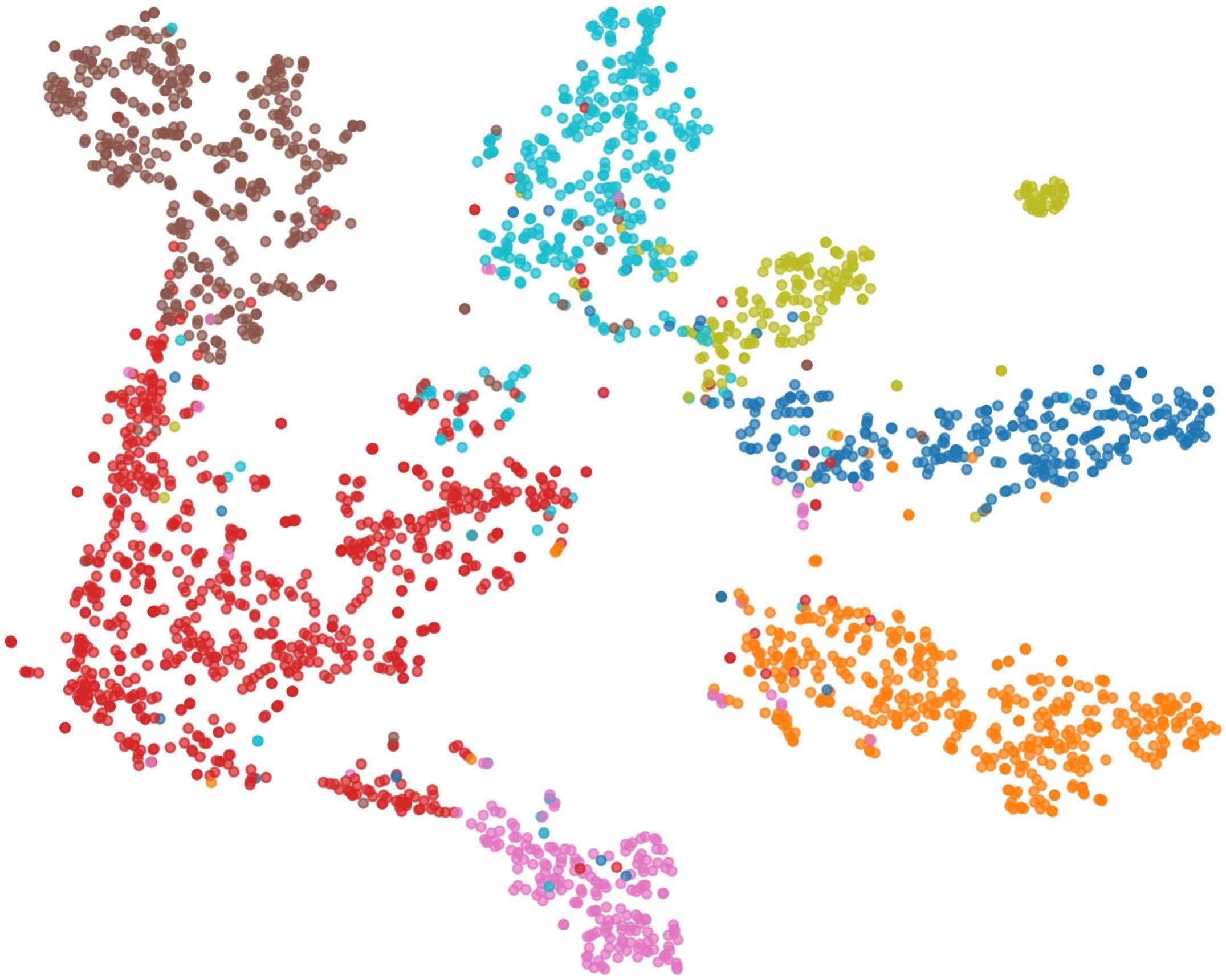}
    \caption{$t=2$ Epoch 400}
    \label{fig:epoch400}
  \end{subfigure}
  \begin{subfigure}[b]{0.48\linewidth}
    \centering
    \includegraphics[width=\linewidth]{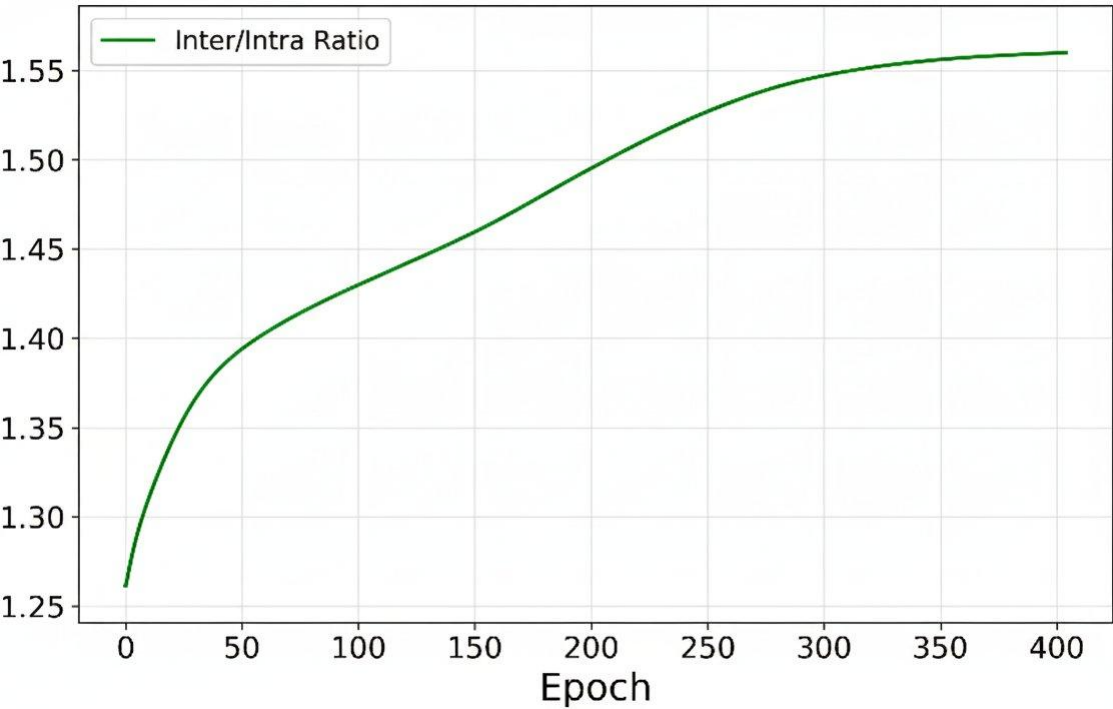}
    \caption{inter/intra-class dist. ratio}
    \label{fig:Clustering}
  \end{subfigure}
  
  \caption{t-SNE visualization of node embeddings (different colors represent different classes) and evolution of inter/intra class distance ratio during training on \textsc{Cora} (We visualize two steps and 400 epochs).}
  \label{fig:tsne_cora}
\end{figure}

%% file: sec-conclusion.tex
\section{Conclusion}
In this work, we presented KCoT, a unified framework that reframes CoT-based graph learning through the principle of clustering as reasoning. By establishing a theoretical isomorphism between Transformer blocks and $k$-means, we bridge the gap between linguistic generation and geometric optimization. Our approach utilizes a Semantic Discriminating Prompt to explicitly formulate reasoning as cluster refinement and employs a Condition-Net to dynamically align topological priors with evolving semantic thoughts. Extensive empirical evaluations across standard benchmarks demonstrate that KCoT not only achieves state-of-the-art performance but also significantly enhances interpretability by grounding natural language reasoning in a principled, mathematically rigorous clustering mechanism.

\section*{Impact Statement}
Potential bias amplification in imbalanced graphs can be mitigated via reweighting and fairness-aware methods, while privacy risks from LLMs can be addressed through anonymization, differential privacy, and restricting sensitive features.

\section*{Acknowledgments}
This work was supported by the National Natural Science Foundation of China (No. U24A20323); Dr. Yuan Fang acknowledges the Lee Kong Chian Fellowship awarded by Singapore Management University for the support of this research work.